\renewcommand{\algocf@caption@boxruled}{%
	\hrule
	\hbox to \hsize{%
		\vrule\hskip-0.4pt
		\vbox{   
			\vskip\interspacetitleboxruled%
			\vskip.07cm
			\hspace{.2cm}\unhbox\algocf@capbox\hfill
			\vskip.05cm
			\vskip\interspacetitleboxruled
		}%
		\hskip-0.4pt\vrule%
	}\nointerlineskip%
}%
\def\beginrefs{\begin{list}%
		{[\arabic{equation}]}{\usecounter{equation}
			\setlength{\leftmargin}{2.0truecm}\setlength{\labelsep}{0.4truecm}%
			\setlength{\labelwidth}{1.6truecm}}}
	\def\endrefs{\end{list}}
\newtheorem{theorem}{Theorem}
\newtheorem{lemma}[theorem]{Lemma}
\newtheorem{proposition}[theorem]{Proposition}
\newtheorem{definition}[theorem]{Definition}
\newtheorem{remark}[theorem]{Remark}
\newtheorem{assumption}[theorem]{Assumption}
\newenvironment{proof}{{\bf Proof:}}{\hfill\rule{2mm}{2mm}}
\newcommand\E{\mathbb{E}}
\newcommand\J{\mathcal{J}}
\newcommand\Z{\mathcal{Z}}
\newcommand\A{\mathcal{A}}
\newcommand\R{\mathbb{R}}
\renewcommand\S{\mathcal{S}}
\newcommand\F{\mathcal{F}}
\newcommand\G{\mathcal{G}}
\newcommand\1{\mathbf{1}}
\newcommand{\K}{\mathcal{K}}
\newcommand{\argmin}{\mathop{\mathrm{argmin}}}  
\newcommand{\argmax}{\mathop{\mathrm{argmax}}}
\newcommand{\norm}[1]{\left\lVert #1 \right\rVert}
\definecolor{codegreen}{rgb}{0,0.6,0}
\definecolor{codegray}{rgb}{0.5,0.5,0.5}
\definecolor{codepurple}{rgb}{0.58,0,0.82}
\definecolor{backcolour}{rgb}{0.95,0.95,0.92}
\lstdefinestyle{mystyle}{
	backgroundcolor=\color{backcolour},   
	commentstyle=\color{codegreen},
	keywordstyle=\color{magenta},
	numberstyle=\tiny\color{codegray},
	stringstyle=\color{codepurple},
	basicstyle=\linespread{1}\ttfamily\footnotesize,
	breakatwhitespace=false,         
	breaklines=true,                 
	captionpos=b,                    
	keepspaces=true,                 
	numbers=left,                    
	numbersep=5pt,                  
	showspaces=false,                
	showstringspaces=false,
	showtabs=false,                  
	tabsize=2
}
\title{Dimension-Free Rates for Natural Policy Gradient in Multi-Agent Reinforcement Learning}
\author[*]{Carlo Alfano}
\author[*]{Patrick Rebeschini}
\affil[*]{Department of Statistics, University of Oxford}
\date{\vspace{-5ex}}
\begin{document}

\maketitle

\begin{abstract}
  Cooperative multi-agent reinforcement learning is a decentralized paradigm in sequential decision making where agents distributed over a network iteratively collaborate with neighbors to maximize global (network-wide) notions of rewards. Exact computations typically involve a complexity that scales exponentially with the number of agents. To address this curse of dimensionality, we design a scalable algorithm based on the Natural Policy Gradient framework that uses local information and only requires agents to communicate with neighbors within a certain range. Under standard assumptions on the spatial decay of correlations for the transition dynamics of the underlying Markov process and the localized learning policy, we show that our algorithm converges to the globally optimal policy with a dimension-free statistical and computational complexity, incurring a localization error that does not depend on the number of agents and converges to zero exponentially fast as a function of the range of communication.
\end{abstract}

\section{Introduction}
\label{intro}

Sequential decision-making is a prominent setting in modern statistical theories and applications, where agents sequentially interact with an environment—observing its state, taking actions, and receiving rewards—to maximize notions of reward. Reinforcement learning is the setting where agents do not have complete knowledge of the environment dynamics, and it has received increased attention due to its recent successes on a variety of domains, e.g.\ games \citep{RN19, RN20} and autonomous driving \citep{RN18}.

Modern applications typically involve high-dimensional state and action spaces, and classical algorithms often lead to a computational complexity that scales exponentially with the number of degrees of freedom in the model. Understanding which structures can be used to design approximate methods that can overcome this curse of dimensionality while retaining near-optimal statistical guarantees is a question of paramount importance.

A class of algorithms that have proven successful to face high-dimensional models is that of Natural Policy Gradient (NPG) methods \citep{RN61,RN39, RN63, RN64}. It has recently been shown \citep{RN28} that NPG converges to an optimal policy with an iteration complexity that scales only logarithmically with the cardinality of the action space and with no explicit dependence on the cardinality of the state space.

Despite the favorable iteration complexity of NPG, NPG still faces the curse of dimensionality in applications where the computational cost per iteration scales exponentially with the number of degrees of freedom. This is the case in the setting of multi-agent reinforcement learning (MARL), for instance, where agents distributed over a network iteratively interact with each other to maximize global notions of reward. In this setting, the computational complexity of NPG---when applied to the entire network of agents---scales exponentially with the dimension of the model, which corresponds to the number of agents (see Section \ref{sec:curse}).

Along with the curse of dimensionality, NPG also faces scalability and implementability issues within the MARL framework. Applying NPG to the entire network of agents requires global communication, i.e.\ it requires each agent to be able to communicate with every other agent in the network, at every time step. This requirement is unrealistic in many multi-agent applications of interest, where the network topology is typically sparse, often grid-like, and where agents are only allowed to perform computation and communication in a decentralized manner, interacting only with neighboring agents within a certain range. These computational and communicational constraints arise, for instance, in the case of sensor networks, e.g.\ \citep{RN71,RN78}, and in the case of intelligent transportation systems, e.g.\ \citep{RN72}.

Over the past decades, various approaches have been proposed to address the curse of dimensionality in high-dimensional reinforcement learning models and, before that, in high-dimensional dynamical programming models, where exact knowledge of the probabilistic structure describing the environment is assumed. A popular approach involves designing algorithms that can exploit notions of \emph{locality}, which encodes the assumption that, in some regimes, information can dissipate when it propagates through the network so that global computation and communication are not required to meet a prescribed level of error accuracy. Exploiting locality prompted the use of ad-hoc approximate factorization and truncation techniques, such as expressing the value function as a linear combination of basis functions that only depend on a small subset of local variables \citep{RN14,RN13,  RN16, RN43, RN42}. These ideas have been applied to the MARL setting \citep{RN12, RN15,RN21,RN17,RN35,RN34, RN33} and have proven successful in experiments, but lack theoretical guarantees or non-asymptotic analysis. A recent line of work has formally considered spatial decay of correlation assumptions for nearest-neighbors dynamics and designed decentralized algorithms based on policy gradient and actor-critic methods \citep{RN11, RN26, RN30, RN31}, establishing non-asymptotic convergence guarantees towards a \emph{stationary} point, but not towards an \emph{optimal} policy.\footnote{Remark \ref{rem:harv} gives a complete comparison of our results against previous findings that exploit the same type of decay of correlation assumption.} An application of the same principles to the setting of \emph{mean-field} MARL \citep{RN88} can be found in \cite{RN87}, where the authors show that a neural network based version of the actor-critic algorithm can achieve global convergence. In this setting, however, agents are considered to be indistinguishable and the transition scheme of an agent is only affected by the
mean effect from its neighbors.

In this paper, we design a decentralized algorithm for the MARL setting based on the NPG framework that only uses local computations and communication for neighbors of agents within a certain range. We show that our algorithm can provably exploit spatial decay of correlation properties to overcome the curse of dimensionality, establishing non-asymptotic convergence guarantees to a globally \emph{optimal} policy. In particular, we consider a general formulation of the decay of correlation assumption from statistical mechanics and probability theory \citep{RN50, RN51,RN75}, whereby agents have an influence on each other that decays exponentially with their distance on the network. This type of assumption has been previously considered in the learning literature, e.g. in \cite{RN86, RN82, RN85, RN84, RN81}, and also in the MARL setting, c.f.\ discussion in the previous paragraph. Under this assumption, we derive convergence bounds that are the same as those established for (centralized) NPG in \citep{RN28}, worsened only by a localization error that decreases exponentially with the radius of the communication range. A key feature of our bounds is that they are \emph{dimension-free}, as they do not depend on the number of agents, and depend only logarithmically on the cardinality of the action space of \emph{individual} agents and do not explicitly depend on the state space of individual agents. The localization radius controls the trade-off between statistical accuracy and computational complexity, as the overall computational cost of our algorithm scales only with respect to the number of agents within the local communication radius, and not with the total number of agents in the network.

Our contribution fits into the more general literature that has shown how spatial decay of correlations can be used to establish dimension-free results and are of interest in a variety of settings, such as \citep{RN52, RN53}, mixing times in spin systems \citep{RN47,RN48}, particle filtering \citep{RN49}, epidemics \citep{RN56}, social networks \citep{RN57}, communication networks \citep{RN58}, queuing networks \citep{RN59}, and smart transportation \citep{RN60}.

The paper organization is as follows. In Section 2 we describe the MARL framework and we discuss the model assumptions we work with. In Section 3 we describe NPG as presented in \cite{RN28} and discuss its limitations when applied to MARL. In Section 4 we design a decentralized version of MARL and state our main results. The Appendix contains all the proofs of our statements and elaborates on the model assumptions.

\section{Setting}
\label{setting}
Let $\G = (\K,\mathcal{E})$ be an undirected graph describing a network of $|\K|=K$ agents. On this graph, the distance $d(k,k')$ between two agents $k,k'\in\K$ is defined as the length of the shortest path between the two vertices. Let $N^r_k = \{k' \in \K : d(k,k') \leq r\}$ denote the neighborhood of radius $r$ of agent $k$, with $N_k=N^1_k$ and $N^r_{-k}= \K \smallsetminus N^r_k$. Let $S_k$ and $A_k$ be the state and action spaces associated with agent $k$. We consider a Markov Decision Process (MDP) $(\S,\A,P,r,\gamma,\mu)$: $\S = \S_1 \times\dots\times\S_K$ and $\A = \A_1 \times\dots\times\A_K$ are, respectively, the global state space and the global action space; $\forall s,s' \in \S, a \in \A$, $P_k(s'_k | s,a)$ is the local transition probability, that is the probability that agent $k$ transitions to state $s'_k$ when the global state and action is $(s,a)$, and $P(s'|s,a) = \prod_{k \in \K}P_k(s'_k | s,a)$ is the global transition probability; $r(s,a)= \frac{1}{K}\sum_{k\in\K}r_k(s_k,a_k)$ is the global (network-wide) reward function that we wish to maximize, where $r_k: \S_k\times\A_k\rightarrow[0,1]$ is the reward for agent $k$; $\gamma$ is the discount factor and $\mu$ is the starting state distribution. At time $t$ denote the current state and action by $s(t)$ and $a(t)$.

To each agent $k$ is assigned a local differentiable policy parameterized by $\theta_k \in \Theta_k$,
\[\pi_{\theta_k}(a_k|s)= \frac{e^{f_{\theta_k}(s, a_k)}}{\sum_{a' \in \A_k}e^{f_{\theta_k}(s,a')}},\]
which depends on the current global state $s$. Given the current global state $s$, each agent acts independently of the others. Denote $\theta = (\theta_1,\dots,\theta_K)$ and $\Theta= \Theta_1\times\dots\times\Theta_K$, then $\pi_\theta(a|s) = \prod_{k \in \K}\pi_{\theta_k}(a_k|s)$.

For a policy $\pi$ and for each agent $k$, let $V_k^\pi:\S\rightarrow\R$ be the respective value function, which is defined as the expected discounted cumulative reward with starting state $s(0)=s$, namely,
\[V^\pi_k(s)= \mathbb{E}\left[\sum_{t=0}^{\infty}\gamma^t r_k(s_k(t),a_k(t)) \bigg| \pi, s(0) = s\right],\]
where $a(t)\sim\pi(\cdot|s(t))$ and $s(t+1)\sim P(\cdot|s(t),a(t))$. Let $V^\pi$ be the global value function, defined as $V^\pi(s)=\frac{1}{K}\sum_{k\in\K}V^\pi_k(s)$, and $V^\pi(\mu)$ be the expected global value function when the initial state distribution is $\mu$, i.e.\  $V^\pi(\mu)=\E_{s\sim\mu}V^\pi(s)$. Our objective is to find an optimal policy $\pi^\star\in \argmax_\pi\E_{s\sim\mu}V^\pi(s)$.

For a policy $\pi$ and for each agent $k$, let $Q_k^\pi:\S\times\A\rightarrow\R$ be the respective Q-function, which is defined as the expected discounted cumulative reward with starting state $s(0)=s$ and starting action $a(0)=a$, namely,
\[Q^\pi_k(s, a) =\E\left[\sum_{t=0}^\infty \gamma^t r_k(s_k(t), a_k(t))\bigg|\pi, s(0)=s, a(0)=a\right],\]
where $a(t)\sim\pi(\cdot|s(t))$ and $s(t+1)\sim P(\cdot|s(t),a(t))$. Let $Q^\pi$ be the global value function, defined as $Q^\pi(s,a)= \frac{1}{K} \sum_{k\in\K} Q^\pi_k(s,a)$.

Let $A_k^\pi:\S\times\A\rightarrow\R$ be the advantage function for policy $\pi$ and agent $k$, representing the advantage of taking the action $a$ at step $0$ and then following policy $\pi$, with respect to following policy $\pi$ from the start, and defined as \[A_k^\pi(s,a) = Q_k^\pi(s,a)-V_k^\pi(s).\] Let $A^\pi(s,a)= \frac{1}{K} \sum_{k\in\K} A^\pi_k(s,a)$ be the global advantage function.

We define the discounted state visitation distribution \citep{RN37}, 
\[	d_{\rho}^\pi(s)= (1-\gamma)\E_{s(0) \sim \rho}\sum_{t=0}^{\infty}\gamma^tP(s(t)=s|\pi, s(0)),\]
and the  discounted state-action visitation distribution,
\[d^{\pi}_\nu(s,a) = (1-\gamma)\E_{s(0),a(0) \sim\nu}\sum_{t=0}^{\infty}\gamma^tP(s(t)=s, a(t)=a|\pi,s(0),a(0)),\]
where the trajectory $(s(t),a(t))_{t\geq0}$ is generated by the MDP following policy $\pi$. Lastly, a function $f:\Theta\rightarrow\R$ is said to be a $\delta$-smooth function of $\theta$ if, $\forall\theta,\theta'\in\Theta$,
\[\norm{\nabla f(\theta)-\nabla f(\theta')}_2\leq \delta\norm{\theta-\theta'}_2.\]

\subsection{Model Assumptions}

We assume that a version of the Dobrushin condition \citep{RN75} holds for the transition dynamics of the network of agents. Let $TV(\mu,\nu)=\sup_{A\in\F}\left|\mu(A)-\nu(A)\right|$ be the total variation distance between the probability distributions $\mu$ and $\nu$ defined on the $\sigma$-algebra $\F$.
\begin{assumption}(Spatial Decay of Correlation for the Dynamics)
	\label{ass:dobr}
	Let $C\in\R^{K\times K}$ be defined as follows:
	\[	C_{ij}=\sup_{s_j,s_{-j},a_j,a_{-j}, s'_j,a'_j} TV(P_i(\cdot|s_j,s_{-j},a_j,a_{-j}),P_i(\cdot|s'_j,s_{-j},a'_j,a_{-j})).\]
	Assume that there exists $\beta\geq0$ such that
	\[\max_{k\in\K}\sum_{j \in \K}e^{\beta  d(k,j)}C_{kj}\leq\rho,\]
	with $ \rho<1/\gamma$, where $\gamma$ is the discount factor of the MDP.
\end{assumption}
The element $(i,j)$ of the matrix $C$ represents the influence that a perturbation of the state and action of agent $j$ has on the transition probability of agent $i$. Assumption \ref{ass:dobr} encodes the fact that the transition dynamics of each agent is exponentially less sensible to perturbations of the state and action of further away agents. The requirement $\rho<1/\gamma$ comes as we need the spatial decay of correlation for the dynamics to be strong enough to induce a spatial decay for the Q-function (see Appendix \ref{app:exp_decay}). A small value of the discount factor $\gamma$ eases this requirement since it reduces the effect of perturbations through time. When $\beta=0$ and $\gamma=1$, Assumption \ref{ass:dobr} recovers the assumption in \cite{RN26} as a particular case.

Differently from \cite{RN26}, we require the Dobrushin condition to hold for the policy as well. This is due to the fact that Assumption \ref{ass:dobr} is sufficient to prove the decay of correlation for the Q-function, on which the algorithm of \cite{RN26} is based, but it is not sufficient to prove the decay of correlation for the value function, which instead needs an additional assumption on the policy. Since the NPG framework on which we build upon is based on both the Q-function and the value function, we make the following additional assumption.
\begin{assumption}(Spatial Decay of Correlation for the Policy)
	\label{ass:pol}
	Assume that there exist $\xi,\beta\geq0$ such that, $\forall \theta \in\Theta$,
	\[\sup_{s_{N^r_{k}},s_{N^r_{-k}},s'_{N^r_{-k}}}TV(\pi_{\theta_k}(\cdot|s_{N^r_{k}},s_{N^r_{-k}}),\pi_{\theta_k}(\cdot|s_{N^r_{k}}, s'_{N^r_{-k}}))\leq \xi e^{-\beta r}.\]
\end{assumption}

\begin{assumption}(Local Policy)
	\label{ass:pol2}
	Assume that, for any neighborhood radius $r$, the parameters $\theta_k$ can be partitioned in $(\theta_k)_{N^r_k}$ and $(\theta_k)_{N^r_{-k}}$ so that, if $(	\theta_k)_{N^r_{-k}}=0$, then \[\pi_{\theta_k}( a_k|s)= \pi_{\theta_k}(a_k|s_{N^r_{k}}),\]
	\[\nabla_{(\theta_k)_{N^r_{k}}}\log \pi_{\theta_k}( a_k|s)=\nabla_{(\theta_k)_{N^r_{k}}}\log \pi_{\theta_k}(a_k|s_{N^r_{k}}).\]
\end{assumption}
Assumptions \ref{ass:pol} and \ref{ass:pol2} impose a design constraint for the policy class $\{\pi_{\theta}\,|\,\theta\in\Theta\}$ rather than being assumptions on the nature of the environment, as the case for Assumption \ref{ass:dobr}. Assumption \ref{ass:pol} encodes, for the policy, a type of decay of correlation property that is  weaker than Assumption \ref{ass:dobr}. Assumption \ref{ass:pol} allows us to consider a policy class that presents properties that are necessary for the optimal policy under Assumption \ref{ass:dobr}, as we show in Appendix \ref{app:ass}. Assumption \ref{ass:pol2} is made to address the communication constraints of the network and requires the possibility of computing the policy and its gradient without access to the information coming from distant agents by setting their associated parameters to 0. In practice, we do only need Assumption \ref{ass:pol2} to hold for the value of $r$ we want Theorem \ref{thm:main} to hold for. In Appendix \ref{app:pol_ex}, we describe a policy class that satisfies both Assumption 2 and 3 for any value of $r$.

\subsection{Exponential Decay}
To take advantage of the local structure of the network, \cite{RN30} define a property regarding the dependence of $Q^\pi_k(s, a)$ on the neighbors of $k$. 
\begin{definition}
	[\cite{RN30}]
	The $(c,\psi)$-exponential decay property for the Q-function holds if, for any agent $k\in\K$ and for any $(s,a)$, $(\widetilde{s},\widetilde{a}) \in \S\times\A$ such that $s_{N^r_k} = \widetilde{s}_{N^r_k}, a_{N^r_k} = \widetilde{a}_{N^r_k}$, we have that \[\left|Q_k^\pi(s,a)-Q_k^\pi(\widetilde{s},\widetilde{a})\right|\leq c\psi^{r+1}.\]
\end{definition}
In our analysis, we need to define the exponential decay property for the value function as well. 
\begin{definition}
	The $(c',\phi)$-exponential decay property for the value function holds if, for any agent $k\in\K$ and for any $s$, $\widetilde{s}\in\S$ such that $s_{N^r_k} = \widetilde{s}_{N^r_k}$, we have that \[\left|V_k^\pi(s)-V_k^\pi(\widetilde{s})\right|\leq c'\phi^{r+1}.\]
\end{definition}
These two properties mean that the cumulative discounted rewards of agents have an exponential decaying dependence on the states and actions of distant agents. We show that both these properties hold in our setting.
\begin{proposition}
	\label{prop:decay}
	If Assumptions \ref{ass:dobr} and \ref{ass:pol} hold, then the exponential decay property holds for both the Q-function and the value function with parameters $(c,\psi)=\left(\frac{\gamma\rho e^{\beta}}{1-\gamma\rho},e^{-\beta}\right)$ and $(c',\phi)=\left(\frac{\gamma (\rho+\xi) e^{\beta }}{1-\gamma(\rho+\xi)},e^{-\beta}\right)$, respectively.
\end{proposition}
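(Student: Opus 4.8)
The plan is to prove both decay properties with a single coupling argument, reducing each bound to a geometric series whose rate is controlled by the weighted Dobrushin matrix of Assumption \ref{ass:dobr}.

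First I would expand the functions along the trajectory. For two initial configurations agreeing on $N^r_k$,
\[Q^\pi_k(s,a)-Q^\pi_k(\widetilde s,\widetilde a)=\sum_{t\geq0}\gamma^t\big(\E[r_k(s_k(t),a_k(t))\mid s,a]-\E[r_k(s_k(t),a_k(t))\mid\widetilde s,\widetilde a]\big),\]
and analogously for $V^\pi_k$, where now the initial action is drawn from the policy. Since $r_k\in[0,1]$, each summand is bounded by the probability, under any coupling of the two trajectories, that $(s_k(t),a_k(t))$ differs from $(\widetilde s_k(t),\widetilde a_k(t))$. I would then build the synchronous coordinatewise coupling: at each step couple each transition $P_i(\cdot\mid s(t),a(t))$ and $P_i(\cdot\mid\widetilde s(t),\widetilde a(t))$ optimally, and couple each policy draw $\pi_{\theta_i}(\cdot\mid s(t))$ and $\pi_{\theta_i}(\cdot\mid\widetilde s(t))$ optimally, so that per-coordinate disagreement probabilities equal the corresponding total-variation distances.

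Writing $u_i(t)$ for the probability that agent $i$ disagrees at time $t$, the optimal-coupling property together with the telescoping bound $TV(P_i(\cdot\mid x),P_i(\cdot\mid y))\leq\sum_{j:\,x_j\neq y_j}C_{ij}$ yields a one-step recursion $u(t+1)\preceq C\,u(t)+(\text{policy term})$, in which the transition part is exactly the matrix $C$ and the policy term is controlled by Assumption \ref{ass:pol}. For the $Q$-function the $t=0$ reward difference cancels, because $k\in N^r_k$ forces $s_k=\widetilde s_k$ and $a_k=\widetilde a_k$, so the series starts at $t=1$ and the propagation is driven by $C$ alone, giving effective rate $\rho$; for the value function the initial action is random, so already the $t=0$ term is nonzero and, at each step, the extra action discrepancy created by the policy contributes a further factor bounded through Assumption \ref{ass:pol}, inflating the rate from $\rho$ to $\rho+\xi$. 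The core estimate is then a weighted matrix-power bound: setting $\widehat C_{ij}=e^{\beta d(i,j)}C_{ij}$, Assumption \ref{ass:dobr} says $\widehat C$ has maximal row sum at most $\rho$, and the triangle inequality $d(k,j)\leq d(k,i_1)+\dots+d(i_{t-1},j)$ along each length-$t$ path gives $e^{\beta d(k,j)}(C^t)_{kj}\leq(\widehat C^{\,t})_{kj}$, hence $\sum_j e^{\beta d(k,j)}(C^t)_{kj}\leq\rho^t$. Since the initial disagreement is supported on $N^r_{-k}=\{j:d(k,j)\geq r+1\}$, multiplying by $e^{-\beta d(k,j)}\leq e^{-\beta(r+1)}$ shows the disagreement reaching agent $k$ at time $t$ is at most $e^{-\beta(r+1)}\rho^t$ (respectively $e^{-\beta(r+1)}(\rho+\xi)^t$ once the policy term is folded in).

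Summing the discounted series $\sum_{t\geq1}\gamma^t e^{-\beta(r+1)}\rho^t=\frac{\gamma\rho}{1-\gamma\rho}\,e^{-\beta(r+1)}\leq\frac{\gamma\rho e^{\beta}}{1-\gamma\rho}\,e^{-\beta(r+1)}=c\psi^{r+1}$ recovers the claimed $Q$-function constant, and the identical computation with $\rho$ replaced by $\rho+\xi$ gives $(c',\phi)$; convergence uses $\gamma\rho<1$ (respectively $\gamma(\rho+\xi)<1$). I expect the main difficulty to be the clean separation of the two rates. Assumption \ref{ass:pol} is stated as an aggregate bound in terms of an agreement radius rather than as a per-coordinate influence, so converting it into an additive contribution $\xi$ that combines exactly with the row-normalized matrix $C$—and verifying that for the $Q$-function the policy does not inflate the rate beyond $\rho$—requires carefully tracking how action discrepancies are generated by state discrepancies and then fed back through $C$ at the next step. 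Controlling this feedback while keeping the exponential weight $e^{\beta d(k,\cdot)}$ aligned with the row-sum normalization of Assumption \ref{ass:dobr} is the technical heart of the argument; the discounting $\gamma^t$ and the requirement $\rho<1/\gamma$ are precisely what make the resulting geometric series converge.
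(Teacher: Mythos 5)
Your $Q$-function half is essentially the paper's argument translated into coupling language: your maximal-coupling disagreement probabilities are the total-variation distances the paper manipulates directly, your weighted matrix-power estimate $\sum_j e^{\beta d(k,j)}(C^t)_{kj}\leq\rho^t$ is exactly Lemmas \ref{lemma:3}--\ref{lemma:value}, and the series starting at $t=1$ with the cancellation at $t=0$ matches the paper's computation, with the right constants. The genuine gap is in the value-function half, and it sits precisely at the step you yourself defer as ``the technical heart'': the claim that the per-step contraction rate inflates additively from $\rho$ to $\rho+\xi$. This conversion fails. Assumption \ref{ass:pol} is an agreement-radius bound, and the only per-coordinate influence matrix it yields (via the nearest-disagreeing-coordinate union bound) is $Q_{ij}=\xi e^{\beta}e^{-\beta d(i,j)}$, whose $e^{\beta d(i,\cdot)}$-weighted row sums are $\sum_j e^{\beta d(i,j)}\,\xi e^{\beta}e^{-\beta d(i,j)}=\xi e^{\beta}K$: the exponential weight needed to align with Assumption \ref{ass:dobr} exactly cancels the policy's decay, so the recursion $u(t+1)\preceq (C+\text{policy})\,u(t)$ does not close dimension-freely, and no per-step rate $\rho+\xi$ can be extracted this way. (The same unconverted aggregate bound also lurks in your $Q$-recursion, since actions drawn from full-state policies can disagree at coordinates where states agree; the paper handles this by folding the action draw into the joint $(s,a)$ chain and dominating its coordinatewise influence by $C$.)

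The paper's proof avoids iterating the policy sensitivity altogether: Assumption \ref{ass:pol} is applied exactly \emph{once}, at time zero. It writes $V_k^\pi(s)-V_k^\pi(\widetilde s)=\E_{a\sim\pi(\cdot|s)}Q_k^\pi(s,a)-\E_{a\sim\pi(\cdot|\widetilde s)}Q_k^\pi(\widetilde s,a)$ and splits into a $Q$-difference term (bounded by the already-proved $Q$-decay) plus a policy-perturbation term, which Lemma \ref{lemma} bounds by $\sum_i TV(\pi_i(\cdot|s),\pi_i(\cdot|\widetilde s))\,\delta_iQ_k^\pi$. Applying Assumption \ref{ass:pol} at agent $i$ with the reduced radius $r-d(i,k)$ produces the factor $\xi e^{-\beta r}e^{\beta d(i,k)}$, and the blow-up $e^{\beta d(i,k)}$ is absorbed by the weighted sensitivity bound $\sum_i e^{\beta d(k,i)}\delta_iQ_k^\pi\leq\frac{\gamma\rho}{1-\gamma\rho}$ from Lemma \ref{lemma:value} --- this pairing of the aggregate policy bound with the exponentially weighted $Q$-sensitivity is the mechanism your plan is missing. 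The result is $\left|V_k^\pi(s)-V_k^\pi(\widetilde s)\right|\leq\frac{\gamma\rho(1+\xi)}{1-\gamma\rho}e^{-\beta r}$, requiring only $\gamma\rho<1$; in particular $\xi$ never enters the geometric rate, and no condition $\gamma(\rho+\xi)<1$ arises in the argument. Your derivation of the stated constant $\frac{\gamma(\rho+\xi)}{1-\gamma(\rho+\xi)}$ via a rate-$(\rho+\xi)$ series reverse-engineers the proposition's displayed form rather than proving it, and the step that would justify it is the one that breaks.
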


For clarity of exposition, in the rest of the paper we make the following assumption. 

\begin{assumption}
	\label{ass:exp}
	Assume that the exponential decay property holds for the Q-function with parameters $(c,\psi)$ and that it holds for the value function with parameters $(c',\phi)$.
\end{assumption}

\section{Natural Policy Gradient}
\label{NPG}
We consider NPG as presented in \cite{RN28}, which has iteration complexity that scales as $O(\sqrt{\log|\A|/T})$, where $T$ is the number of iterations. We now summarize the algorithm and the results in \cite{RN28} and show what problems arise in the multi-agent setting that we consider.

Let $\pi_{\theta}$ be a differentiable policy and define the Fisher information matrix induced by $\pi_{\theta}$ as
\[
	F_\mu(\theta)=\E_{s \sim d_{\mu}^\pi}\E_{a \sim \pi_\theta(\cdot|s)}\left[\nabla_\theta\log\pi_\theta(a|s) (\nabla_\theta\log\pi_\theta(a|s)) ^\top\right].
\]
The NPG update, with step-size $\eta$, is defined as
\begin{equation}
	\label{eq:npg_update}
	\theta^{(t+1)}= \theta^{(t)}+\eta F_\mu(\theta^{(t)})^{-1}\nabla_\theta V^{\theta^{(t)}}(\mu),
\end{equation}
where $\theta^{(t)}$ is the set of parameters at iteration $t$, $\nabla_\theta V^{\theta}(\mu)$ is the gradient of the value function with respect to the policy parameters, and $F_\mu(\theta^{(t)})^{-1}$ is the Moore-Penrose pseudo-inverse of the Fisher information matrix. As discussed in \cite{RN28}, the update in (\ref{eq:npg_update}) is equivalent to solving the problem
\begin{equation}
	w_\star \in \argmin_w\E_{s \sim d_{\mu}^{\pi_\theta}, a\sim\pi_{\theta}(\cdot|s)}\left[\left(A^{\pi_{\theta}}(s,a)-w\cdot\nabla_{\theta}\log\pi_\theta(\cdot|s)\right)^2\right]
\end{equation}
and then performing the following update:
\begin{equation}
	\label{eq:npg_update2}
	\theta^{(t+1)}= \theta^{(t)}+\frac{\eta}{1-\gamma}w^\star.
\end{equation}
Define \[L(w,\theta,\nu):=\E_{s,a \sim \nu}\left[\left(A^{\pi_\theta}(s,a)-w\cdot \nabla_\theta\log\pi_\theta(a|s)\right)^2\right].\]
Assume that $\log\pi_\theta(a|s)$ is a $\delta$-smooth function of $\theta$ and that $\pi^{(0)}$ is the uniform distribution. Let $d^{(t)} = d^{\pi_t}_\nu(s,a)$ and $d^\star(s,a)= d^{\pi^\star}_\mu(s) \pi^\star(a|s)$. Let $\nu$ be a distribution of $s,a$ such that \[\sup_{w\in\R^d}\frac{w^\top\Sigma^{(t)}_{d^\star}w}{w^\top\Sigma^{(t)}_{\nu}w} \leq \kappa,\] where \[\Sigma_\nu^\theta=\E_{s,a \sim \nu}\left[\nabla_\theta\log\pi_\theta(a|s)(\nabla_\theta\log\pi_\theta(a|s))^\top\right]\] and $\Sigma^{(t)} = \Sigma^{\theta^{(t)}}$. Lastly, assume that
\begin{align*}
\E\left[L(w^{(t)}_\star,\theta^{(t)},d^\star)\right]&\leq\varepsilon_\text{bias},\\
\E\left[L(w^{(t)},\theta^{(t)},d^{(t)})- L(w^{(t)}_\star,\theta^{(t)},d^{(t)})|\theta^{(t)}\right]&\leq\varepsilon_\text{stat},
\end{align*}
where
\begin{equation*}
	w^{(t)}_\star\in\argmin_{\norm{w}_2\leq W}L(w,\theta^{(t)},d^{(t)})
\end{equation*}
and the expectations are taken w.r.t.\ the sequence $(w^{(t)})_{t=0,\dots,T-1}$. Then, running algorithm (\ref{eq:npg_update2}) for $T$ time steps with $\eta = \sqrt{2 \frac{\log|\A|}{(\delta W^2T)}}$ for a given parameter $W$, we have the following guarantee: [\cite{RN28}, Theorem 6.2]
\begin{equation}
	\label{eq:agar}
	\E\left[\min_{t\leq T}\left\{V^{\pi^\star}(\mu)-V^{(t)}(\mu)\right\}\right]\leq\frac{W}{1-\gamma}\sqrt{\frac{2\delta\log|\A|}{T}}+ \sqrt{\frac{\kappa\varepsilon_\text{stat} }{(1-\gamma)^3}}+\frac{\sqrt{\varepsilon_\text{bias}}}{1-\gamma}.
\end{equation}
As we highlighted in the introduction, the guarantee \eqref{eq:agar} is particularly suitable for high-dimensional settings, as there is no explicit dependence on $|\S|$ and the dependence on $|\A|$ is only logarithmic. As to implicit dependencies, \cite{RN28} state that it is reasonable to expect that $\kappa$ is not a quantity related to $|\S|$. On the other hand, $\varepsilon_\text{stat}$ and $\varepsilon_\text{bias}$ are constants related to a minimization problem that depends on both $\S$ and $\A$.
\subsection{Curse of Dimensionality and Scalability/Implementabily Issues in MARL}
\label{sec:curse}
When applied to the MARL setting, with $\S = \S_1 \times\dots\times\S_K$ and $\A = \A_1 \times\dots\times\A_K$, NPG would incur a curse of dimensionality or scalability and implementabily issues, depending on the approach used for the minimization problem in \eqref{eq:min}. The guarantees for the algorithm in \cite{RN28} would yield:
\begin{equation}
	\label{eq:gua}
	\E\left[\min_{t<T}\{V^{\pi^\star}(\rho)-V^{(t)}(\rho)\}\right]\leq \frac{WK}{1-\gamma}\sqrt{\frac{2\delta\log\max_{k\in\K}|\A_k|}{T}}+\sqrt{\frac{\kappa\varepsilon_\text{stat}}{(1-\gamma)^3}}+\frac{\sqrt{\varepsilon_\text{bias}}}{1-\gamma},
\end{equation}
where
\begin{equation}
	\label{eq:min}
	w^{(t)}_\star\in\argmin_{\norm{w}_2\leq \sqrt{K}W}L(w,\theta^{(t)},d^{(t)}),
\end{equation}
under the assumptions
\begin{align*}
\E\left[L(w^{(t)},\theta^{(t)},d^{(t)})- L(w^{(t)}_\star,\theta^{(t)},d^{(t)})|\theta^{(t)}\right] &\leq\varepsilon_\text{stat},\\
\E\left[L(w^{(t)}_\star,\theta^{(t)},d^\star)\right] &\leq\varepsilon_\text{bias}.
\end{align*}
In the original analysis in \cite{RN28}, $W$ is a parameter set by the user to control the norm of $w^{(t)}_\star$. In our setting, we normalize this parameter to $\sqrt{K}W$, which is analogous to requiring, for each agent $k$, the maximum norm of its optimal update $w^{(t)}_{k,\star}$ to be $W$. Not doing so would mean keeping a constant parameter $W$ despite increases in the dimensions, i.e.\ agents, of problem \eqref{eq:min}, incurring increases in the bias term. In the multi-agent setting, the iteration complexity given by the bound in \eqref{eq:gua} is worse by a factor $K$, compared to the single-agent setting. The curse of dimensionality appears when solving the minimization problem in \eqref{eq:min}, e.g.\ with gradient descent because the computation of exact gradients involves a sum/integral over $\S\times\A$, which has a dimension that grows exponentially with the number of agents. If we solve the minimization problem with stochastic projected gradient descent, then the problem of computing gradients disappears as we \emph{assume} access to samples to estimate gradients; however, the statistical guarantee becomes $O(K^2/\sqrt{N})$, from being $O(1/\sqrt{N})$ in the single-agent setting, due to the increase in dimensionality of the update $w$, in particular due to the scaling bound $\left\lVert w\right\rVert_2\leq \sqrt{K}W$ that is used by a classical convergence result of stochastic projected gradient descent \citep{RN70}. Implementing a sampler could, in turn, involve a curse of dimensionality. 

The dependencies of the minimization problem in \eqref{eq:min} cause the algorithm to incur additional scalability and implementability issues. As the projection step, the advantage function and the policy gradient depend on the states and actions of the entire network, which do not factorize, each agent would have to communicate to every other agent in the network at each iteration to solve the problem. As mentioned in the introduction, requiring such a level of communication is rarely viable in real-world applications in the decentralized MARL setting. 
\begin{remark}
	\label{rem:indip}
	These aforementioned problems do not arise in case of independent agents, where, for each agent $k$, the local transition probabilities satisfy $P_k(s'_k | s,a)=P_k(s'_k | s_k,a_k)$ and policies satisfy $\pi_{\theta_k}(a_k|s)=\pi_{\theta_k}(a_k|s_k)$. In this setting, as we show in Appendix \ref{app:indip}, it is possible to show that applying NPG to the whole network of $K$ agents corresponds to running $K$ independent runs of NPG applied to individual agents and to recover the same results of \cite{RN28} for the individual agents.
\end{remark}
\section{Decentralized NPG}
\label{main}
We design a decentralized version of NPG that is capable of exploiting the spatial decay of correlation properties that we assume and of avoiding the curse of dimensionality while still approximately converging to a globally optimal policy. We do so by limiting the communication range to agents that are at most at distance $r$ and defining, for each agent $k$, the localized advantage function, localized value function, localized Q-function, as follows: 
\begin{align*}
	\widetilde{A}_k^\pi(s_{N^{r}_k},a_{N^r_k}) &= \widetilde{Q}_k^\pi(s_{N^{r}_k},a_{N^r_k})-\widetilde{V}_k^\pi(s_{N^{r}_k}),\\
	\widetilde{V}^\pi_k(s_{N^{r}_k})&= \mathbb{E}\left[\sum_{t=0}^{\infty}\gamma^t r_k(s_k(t),a_k(t)) \big| \pi, s_{N^{r}_k}(0) = s_{N^{r}_k}\right],\\
	\widetilde{Q}^\pi_k(s_{N^{r}_k}, a_{N^r_k}) &=\E\left[\sum_{t=0}^\infty \gamma^t r_k(s_k(t), a_k(t))\big|\pi, s_{N^{r}_k}(0)=s_{N^{r}_k}, a_{N^r_k}(0)=a_{N^r_k}\right].
\end{align*}
Following Assumption \ref{ass:pol2}, we set $(\theta_k)_{N_{-k}^r}=0, \forall k\in\K$ and we never update these parameters, so that the policy of an agent and its gradient do not depend on the states of agents whose distance is greater than $r$. For each agent $k$, define the loss function
\begin{equation}
	\label{eq:loss}
	\widetilde{L}_k^r(w, \theta,\nu)=\E_{s,a \sim \nu}\left[\left(\widetilde{A}^{\pi_{\theta}}_k(s_{N^r_k},a_{N^r_k})-\nabla_{(\theta_k)_{N_{k}^r}}\log\pi_{\theta_k}(a_k|s_{N^r_k})\cdot w\right)^2\right].
\end{equation}
The minimization problem that each agent $k$ aims to solve at each step becomes
\begin{equation}
	\label{eq:app_update}
	w^\star\in\argmin_{\norm{w}_2\leq W}\widetilde{L}^r_k(w,\theta^{(t)},d^{(t)}).
\end{equation}
In Appendix \ref{app:stat} we show show how to solve this minimization problem in a decentralized manner and that, even if $d^{(t)}$ is a global distribution, it is possible to build a decentralized sampler of it assuming only access to a global clock.

\begin{algorithm}[h]
	\caption{Decentralized NPG}
	\label{alg:npg}
	\KwIn{Learning rate $\eta$; numbers of iterations $T$; an initialized policy $\pi^{(0)}$.}
	Set $(\theta_k)_{N_{-k}^r}=0, \forall k\in\K$\;
	\For{$t=0,\dots,T-1$; $k\in\K$}{
			Compute approximately $w^{(t)}_{k}\in\argmin_{\norm{w}_2\leq W}\widetilde{L}^r_k(w,\theta^{(t)},d^{(t)})$\;
			Compute the update $(\theta_k)_{N_{k}^r}^{(t+1)}= (\theta_k)_{N_{k}^r}^{(t)}+\frac{\eta}{1-\gamma}w^{(t)}_{k}.$
	}
\end{algorithm}
Exploiting decay of correlation properties and the policy design constraints in Assumption \ref{ass:pol2}, Algorithm \ref{alg:npg} removes the dependence on $K$ from the iteration complexity bound and addresses the curse of dimensionality and scalability and implementability issues outlined in Section \ref{sec:curse}.

\begin{theorem}
	\label{thm:main}
	Assume that Assumption \ref{ass:pol2} and Assumption \ref{ass:exp} hold. Assume that $\log\pi_\theta(a|s)$ is a $\delta$-smooth function of $\theta$ and that $\pi^{(0)}$ is the uniform distribution. Let $d^{(t)} = d^{\pi_t}_\nu(s,a)$ and $d^\star(s,a)= d^{\pi^\star}_\mu(s) \pi^\star(a|s)$. Let $\nu$ be a distribution of $s,a$ for which there exists $\kappa'\geq0$ such that
	 \[\max_{k\in\K}\sup_{w\in\R^d}\frac{w^\top\Sigma^{(t)}_{d^\star,k}w}{w^\top\Sigma^{(t)}_{\nu,k}w} \leq \kappa',\]
	where, $\forall \theta,\nu, k$
	\[\Sigma_{\nu,k}^\theta=\E_{s,a \sim \nu}\left[\nabla_{(\theta_k)_{N_{k}^r}}\log\pi_\theta(a_k|s_{N_k^r})(\nabla_{(\theta_k)_{N_{k}^r}}\log\pi_\theta(a_k|s_{N_k^r}))^\top\right]\]
	and $\Sigma_{\nu,k}^{(t)} \equiv \Sigma_{\nu,k}^{\theta^{(t)}}$. Let
	\begin{align*}
	\max_{k\in\K}\E\left[\widetilde{L}_k(w^{(t)}_{k,\star},\theta^{(t)},d^\star)\right] &\leq\varepsilon_\text{bias},\\
	\max_{k\in\K}\E\left[\widetilde{L}_k(w^{(t)}_{k},\theta^{(t)},d^{(t)})- \widetilde{L}_k(w^{(t)}_{k,\star},\theta^{(t)},d^{(t)})\big|\theta^{(t)}\right]&\leq\varepsilon_\text{stat},
	\end{align*}
	where
	\[w^{(t)}_{k,\star}\in\argmin_{\norm{w}_2\leq W}\widetilde{L}_k(w,\theta^{(t)},d^{(t)}).\]
	Then, Algorithm \ref{alg:npg}, with $\eta = \sqrt{2 \frac{\log|\A|}{(\delta K W^2T)}}$, has the following guarantee:
	\begin{equation}
		\label{eq:thm}
		\begin{aligned}
			\E\left[\min_{t<T}\{V^{\pi^\star}(\mu)-V^{(t)}(\mu)\}\right]&\leq \frac{W}{1-\gamma}\sqrt{\frac{2\delta\log\max_{k\in\K}|\A_k|}{T}}+\sqrt{\frac{\kappa'\varepsilon_\text{stat} }{(1-\gamma)^3}}+\frac{\sqrt{\varepsilon_\text{bias}}}{1-\gamma}\\
			&\quad+\underbrace{\frac{c\psi^{r+1}+c'\phi^{r+1}}{1-\gamma}}_\text{localization error}.
		\end{aligned}
	\end{equation}
\end{theorem}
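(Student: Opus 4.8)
The plan is to replay the mirror-descent proof of \cite{RN28} block-by-block over the per-agent parameters $(\theta_k)_{N^r_k}$, carrying along the error incurred by replacing each true quantity $A_k^\pi,Q_k^\pi,V_k^\pi$ with its localized surrogate $\widetilde A_k^\pi,\widetilde Q_k^\pi,\widetilde V_k^\pi$. Two structural facts make this reduction clean. First, since $\pi_\theta$ is a product policy, the global mirror-descent potential $\Phi^{(t)}:=\E_{s\sim d^\star}[\KL(\pi^\star(\cdot|s)\,\|\,\pi_\theta^{(t)}(\cdot|s))]$ has a drift that decomposes across agents, $\Phi^{(t)}-\Phi^{(t+1)}=\sum_{k\in\K}\E_{s\sim d^\star}\E_{a_k\sim\pi_k^\star}[\log\pi_{\theta_k}^{(t+1)}(a_k|s)-\log\pi_{\theta_k}^{(t)}(a_k|s)]$, while the uniform start gives $\Phi^{(0)}\leq\log|\A|=\sum_{k\in\K}\log|\A_k|$. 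Second, under Assumption \ref{ass:pol2} the localized gradient $\nabla_{(\theta_k)_{N^r_k}}\log\pi_{\theta_k}(a_k|s_{N^r_k})$ equals the relevant block of the full gradient once distant parameters are held at zero, so the $K$ decoupled updates of Algorithm \ref{alg:npg} form a single mirror-descent step on the global log-linear policy and the per-agent problems \eqref{eq:app_update} are consistent with the global compatible-function-approximation problem.

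First I would establish the localization bound on the advantage. Writing each localized surrogate as a conditional average of the true function over the distant coordinates $(s_{N^r_{-k}},a_{N^r_{-k}})$ and applying the $(c,\psi)$- and $(c',\phi)$-exponential decay of Assumption \ref{ass:exp} termwise to the $Q$- and value parts, one obtains, for every $(s,a)$,
\[
\left|\widetilde A_k^\pi(s_{N^r_k},a_{N^r_k})-A_k^\pi(s,a)\right|\leq c\psi^{r+1}+c'\phi^{r+1}.
\]

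Next I would invoke the performance difference lemma, $V^{\pi^\star}(\mu)-V^{(t)}(\mu)=\frac{1}{(1-\gamma)K}\sum_{k\in\K}\E_{s,a\sim d^\star}[A_k^{(t)}(s,a)]$, substitute each $A_k^{(t)}$ by its surrogate at the cost of the bound above, and approximate $\widetilde A_k^{(t)}$ by $w_k^{(t)}\cdot\nabla_{(\theta_k)_{N^r_k}}\log\pi_{\theta_k}$ with residual governed by $\varepsilon_\text{bias}$, $\varepsilon_\text{stat}$ and the per-agent reweighting $\kappa'$, exactly as in \cite{RN28} restricted to the $k$-th block. The per-agent smoothness-plus-mirror-descent estimate then reads
\[
\E_{s,a\sim d^\star}\left[\widetilde A_k^{(t)}\right]\leq\frac{1}{\eta}\,\E_{s\sim d^\star}\E_{a_k\sim\pi_k^\star}\left[\log\pi_{\theta_k}^{(t+1)}-\log\pi_{\theta_k}^{(t)}\right]+\frac{\eta\delta W^2}{2}+(\text{approximation residual}).
\]
Averaging over $t$ (bounding the minimum by the mean), summing over $k$, and telescoping, the drift term contributes at most $\Phi^{(0)}\leq\log|\A|$; dividing by $K$, multiplying by $\frac{1}{1-\gamma}$, and inserting $\eta=\sqrt{2\log|\A|/(\delta KW^2T)}$ makes the factor $K$ cancel—via $\frac1K\log|\A|=\frac1K\sum_{k}\log|\A_k|\leq\log\max_k|\A_k|$—and yields the leading term $\frac{W}{1-\gamma}\sqrt{2\delta\log\max_{k}|\A_k|/T}$, while the residuals reproduce the $\varepsilon_\text{stat}$ and $\varepsilon_\text{bias}$ terms and the localization bound reproduces the fourth term of \eqref{eq:thm}.

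The main obstacle will be the bookkeeping that forces the $K$-dependence to cancel: one must verify that the $\frac1K$ from the network-averaged reward, the $\sqrt K$ in the step size, and the additive splitting $\log|\A|=\sum_k\log|\A_k|$ of the potential combine so that the leading term scales with $\max_k\log|\A_k|$ rather than $K\log\max_k|\A_k|$, which is exactly what removes the spurious factor $K$ appearing in the naive bound \eqref{eq:gua}. A secondary difficulty is to justify that the localized surrogates are genuine conditional averages of the true functions—so that Assumption \ref{ass:exp} applies verbatim—and that substituting the localized advantage does not disturb the $\kappa'$-reweighting between the sampling distribution $d^{(t)}$ and the comparator $d^\star$, since $d^{(t)}$ remains a global visitation distribution even though each loss $\widetilde L_k^r$ is purely local.
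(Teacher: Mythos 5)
Your proposal follows essentially the same route as the paper's proof in Appendix \ref{app:main}: a modified NPG regret lemma obtained from $\delta$-smoothness and a telescoping KL potential, with the $K$-cancellation arising exactly as you describe from $\E_{s\sim d^\star_\mu}\KL(\pi^\star_s\|\pi^{(0)}_s)\leq\log|\A|=\sum_k\log|\A_k|$ combined with the $\sqrt{K}$-scaled step size, followed by splitting $\text{err}_t$ into a bias term ($\sqrt{\varepsilon_\text{bias}}$ via Jensen), a statistical term ($\sqrt{\kappa'\varepsilon_\text{stat}/(1-\gamma)}$ via the relative condition number, $(1-\gamma)\nu\leq d^{(t)}$, and first-order optimality of $w^{(t)}_{k,\star}$), and a localization term bounded pointwise by $c\psi^{r+1}+c'\phi^{r+1}$ by writing $\widetilde Q_k,\widetilde V_k$ as conditional averages of the global functions over the distant coordinates and invoking Assumption \ref{ass:exp}. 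The proposal is correct and matches the paper's decomposition and bookkeeping step for step, including the role of Assumption \ref{ass:pol2} in identifying the localized gradient with the corresponding block of the full gradient.
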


Agent-wise, the assumptions in Theorem \ref{thm:main} correspond to the assumptions made in \cite{RN28}, in a setting where, for each agent $k$, the state space and the action space are $\S_{N_k^r}$ and $\A_k$, respectively, where the policy is defined as $\pi_\theta(a|s)= \pi_{\theta_k}(a_k|s_{N_k^r})$ and where the update $w^{(t)}_{k}$ is bounded by $W$. Theorem \ref{thm:main} shows that Decentralized NPG recovers the iteration complexity of the algorithm in \cite{RN28}, worsened only by the fourth term on the RHS of \eqref{eq:thm}. This localization error is exponentially small in $r$. Theorem \ref{thm:main} provides a dimension-free guarantee on the average expected cumulative rewards of the whole network, as the upper bound in \eqref{eq:thm} does not depend on the number of agents $K$ in the network, and it depends only on the logarithm of the cardinality of the action space of an individual agent, with no explicit dependence on the state space of agents.

Theorem \ref{thm:main} shows that, under the assumption on spatial decay of correlation, Decentralized NPG solves the curse of dimensionality and the scalability and implementability issues outlined in Section \ref{sec:curse}. The minimization problem in \eqref{eq:app_update} can be approximately solved in a decentralized manner through stochastic projected gradient descent, as we show in Appendix \ref{app:stat}, which leads to computational savings as we manage to eliminate the dependency on $K$ from the statistical guarantee of the algorithm, obtaining the same computational complexity of the single-agent setting, i.e.\ $O(1/\sqrt{N})$, where $N$ is the number of gradient steps, which is not surprising because problem \eqref{eq:app_update} regards only the advantage function and the policy of an individual agent. Then, using stochastic projected gradient descent and the sampler in Algorithm \ref{alg:sampler}, we recover, for each agent, the same expected sample complexity of the single agent setting ($2NT/(1-\gamma)$, where $2/(1-\gamma)$ is the expected length of a sampler episode). Decentralized NPG can be run locally by each agent and only requires information from neighbors within distance $r$.

The role of the term $\varepsilon_{\text{bias}}$ in \eqref{eq:thm} has a difference from the role that $\varepsilon_{\text{bias}}$ has in \eqref{eq:agar} \citep{RN28}. They both represent the worst-case error that is made by the agents when they approximate their current advantage function with a linear combination of the elements of the gradient of their current policy and encode the \textit{transfer} error that we make shifting the distribution to $d^\star$. In addition to that, $\varepsilon_{\text{bias}}$ in \eqref{eq:thm} also encodes the localization error that we make in Algorithm \ref{alg:npg} by using the localized loss defined in \eqref{eq:loss}. In Appendix \ref{app:bias} we give a bound for this localization error of the bias term, showing that the localized bias is at most the non-localized bias, i.e. the bias associated with an infinite range parameter $r$, plus a quantity that decreases to $0$ exponentially fast in $r$.

\begin{remark}
	\label{rem:harv}
	The works in \citep{RN11, RN26, RN30, RN31} are closely related to our contribution, as they also use decay of correlation assumptions to provably avoid the curse of dimensionality in MARL. Our contribution differs from these works in the following main ways:
	\begin{enumerate}
		\item (Decay of correlation) We consider a more general version of the decay of correlation property (Assumption \ref{ass:dobr}) and, differently from them, we also require a decay of correlation property to hold for the policy (Assumption \ref{ass:pol}). Assumption \ref{ass:dobr} recovers the version they consider in \cite{RN26} in the case $\beta = 0$ and $\gamma = 1$. The generality of our condition allows us to consider transition dynamics that are not truncated, as they do, and to control the truncation of the policy.
		\item (Methodology) Our method is based on NPG framework, while their methods is based on policy gradient and actor-critic methods.
		\item (Optimality) We present statistical error bounds w.r.t.\ to the \emph{optimal} policy, while the bounds they give are w.r.t.\ a stationary policy.
		\item (Computational complexity) Our method has a computational complexity that does not depend, for any agent $k$, on the number of agents $K$ or the number of neighbors $|N_k^r|$. The method in \citep{RN31} is shown to have a computational complexity that scales as $O(\log|\S||\A|)$, hence depending linearly on $K$, using additional assumptions on the minimum local exploration. The method in \citep{RN30} is shown to have computational complexity that scales as $O(\log\max_{k\in\K}|\S_{N_k^r}||\A_{N_k^r}|)$, hence depending linearly on $|N_k^r|$, using additional assumptions on the stationarity and on the mixing rates of the MDP.
		\item (Statistical/Iteration Complexity) Under the only assumptions on decay of correlation and local policy, our method has an iteration complexity that scales as $O(\sqrt{\log\max_{k\in\K}|\A_k|})$. The methods in \citep{RN31, RN30} have an iteration complexity that does not depend on the state or action spaces.
	\end{enumerate}
\end{remark}

\section{Conclusion}
\label{conclu}
We have investigated applications of the NPG framework to MARL, showing how a standard assumption on the spatial decay of correlation for the dynamics and for the policy on a network of agents, expressed through a form of Dobrushin condition, induces a form of exponential decay in the cumulative rewards that can be exploited by a localized version of NPG to avoid the curse of dimensionality. The version of NPG that we design scales to large networks and yields convergence guarantees to the optimal policy that are analogous to the ones in \cite{RN28}, worsened only by a localization error that decreases exponentially with the communication radius. Our analysis does not consider regularization, which has been shown to accelerate convergence for NPG methods \citep{RN77} and yield linear convergence rates \citep{RN29}.

\bibliographystyle{plainnat}
\bibliography{References}
\newpage
\appendix
\section{Policy Class Example}
\label{app:pol_ex}

Let $\widetilde{r}=\max_{k,k'\in\K}d(k,k')$ be the maximum distance between two agents. Define a set of parameterized differentiable functions $\{f_{(\theta_k)_r}:\mathcal{S}_{N^r_{k}}\times\mathcal{A}_k\rightarrow\mathcal{C}|0\leq r\leq \widetilde{r}\}$, where $\mathcal{C}\subset[-C,C]$ and $C>0$, a set of parameters $\{\alpha_r\geq 0|0\leq r\leq \widetilde{r}\}$ and let, for each agent $k$,
\[f_{\theta_k}(s,a_k)=\sum_{r=0}^{\widetilde{r}} \alpha_rf_{(\theta_k)_r}(s_{N^r_{k}},a_k),\]
\[\pi_{\theta_k}(a_k|s)=\frac{\exp(f_{\theta_k}(s,a_k))}{\sum_{a' \in \mathcal{A}_k}\exp(f_{\theta_k}(s,a'))}.\]
By tuning the parameters $\alpha_r$, we can make any policy belonging to this policy class respect Assumptions \ref{ass:pol} and \ref{ass:pol2}, as we show in the following. Let $r\in\{0,\dots,\widetilde{r}\}$, let $s, \widetilde{s} \in \mathcal{S}$ be such that $s_{N^r_{k}}=\widetilde{s}_{N^r_{k}}$, then
\begin{align*}
	&TV(\pi_{\theta_k}(\cdot|s),\pi_{\theta_k}(\cdot|\widetilde{s}))=\frac{1}{2}\sum_{a \in \mathcal{A}_k}\left|\pi_{\theta_k}(a|s)-\pi_{\theta_k}(a|\widetilde{s})\right|\\
	&=\frac{1}{2}\sum_{a \in \mathcal{A}_k}\left|\frac{\exp(f_{\theta_k}(s,a))}{\sum_{a' \in \mathcal{A}_k}\exp(f_{\theta_k}(s,a'))}- \frac{\exp(f_{\theta_k}(\widetilde{s},a))}{\sum_{a' \in \mathcal{A}_k}\exp(f_{\theta_k}(\widetilde{s},a'))}\right|\\
	&=\frac{\sum_{a \in \mathcal{A}_k}\left|\sum_{a' \in \mathcal{A}_k}\exp(f_{\theta_k}(s,a))\exp(f_{\theta_k}(\widetilde{s},a'))- \exp(f_{\theta_k}(\widetilde{s},a))\exp(f_{\theta_k}(s,a'))\right|}{2\sum_{a' \in \mathcal{A}_k}\exp(f_{\theta_k}(\widetilde{s},a'))\sum_{a' \in \mathcal{A}_k}\exp(f_{\theta_k}(s,a'))}\\
	&\leq \frac{\sum_{a \in \mathcal{A}_k}\sum_{a' \in \mathcal{A}_k}\left|\exp(f_{\theta_k}(s,a))\exp(f_{\theta_k}(\widetilde{s},a'))- \exp(f_{\theta_k}(\widetilde{s},a))\exp(f_{\theta_k}(s,a'))\right|}{2\sum_{a' \in \mathcal{A}_k}\exp(f_{\theta_k}(\widetilde{s},a'))\sum_{a' \in \mathcal{A}_k}\exp(f_{\theta_k}(s,a'))}\\
	&\leq \frac{\sum_{a \in \mathcal{A}_k}\left|\exp(f_{\theta_k}(\widetilde{s},a))- \exp(f_{\theta_k}(s,a))\right|}{\sum_{a \in \mathcal{A}_k}\exp(f_{\theta_k}(\widetilde{s},a))}\\
	&\leq \frac{\sum_{a \in \mathcal{A}_k}\left|f_{\theta_k}(\widetilde{s},a)- f_{\theta_k}(s,a)\right|\exp(\sup_{s'\in \{s, \widetilde{s}\}}f_{\theta_k}(s',a))}{\sum_{a \in \mathcal{A}_k}\exp(f_{\theta_k}(\widetilde{s},a))}\\
	&\leq e^{2C(\widetilde{r}-r)}\frac{\sum_{a \in \mathcal{A}_k}\left|f_{\theta_k}(\widetilde{s},a)- f_{\theta_k}(s,a)\right|\exp(f_{\theta_k}(\widetilde{s},a))}{\sum_{a \in \mathcal{A}_k}\exp(f_{\theta_k}(\widetilde{s},a))}\\
	& = e^{2C(\widetilde{r}-r)}\E_{\pi_{\theta_k}}\left|\sum_{r'=r+1}^{\widetilde{r}} \alpha_{r'}\left(f_{(\theta_k)_{r'}}(\widetilde{s}_{N^{r'}_{k}},a)- f_{(\theta_k)_{r'}}(s_{N^{r'}_{k}},a)\right)\right|
\end{align*}
\begin{align*}
	& \leq e^{2C(\widetilde{r}-r)}\sum_{r'=r+1}^{\widetilde{r}} \alpha_{r'}\E_{\pi_{\theta_k}}\left|f_{(\theta_k)_{r'}}(\widetilde{s}_{N^{r'}_{k}},a)- f_{(\theta_k)_{r'}}(s_{N^{r'}_{k}},a)\right|\\
	& \leq 2Ce^{2C(\widetilde{r}-r)}\sum_{r'=r+1}^{\widetilde{r}} \alpha_{r'}.
\end{align*}
Setting the parameters $\{\alpha_{r'}\}_{r'\in\{r+1,\dots,\widetilde{r}\}}$ small enough ensures that the policy respects Assumption \ref{ass:pol}. Similarly, Assumption \ref{ass:pol2} is satisfied for a value $r$ of the range parameter if $\alpha_{r'}=0$ $\forall r'\in\{r+1,\dots,\widetilde{r}\}$.

\section{Proof of Proposition \ref{prop:decay}}
\label{app:exp_decay}
\subsection{Preliminary Lemmas}
To prove Proposition \ref{prop:decay}, we need a series of intermediate results, which we state and prove for completeness. Results similar to Lemmas \ref{lemma} and \ref{lemma:3} can be found in Chapter 8 of \cite{RN75}, Lemma \ref{lemma:2} is an extension of results from \cite{RN26}.

\begin{lemma}
	\label{lemma}
	Let $f:\Z\rightarrow[m,M]$, where $\Z=\prod_{k \in \K}\Z_k$ and $m,M\in\R$. For every $k\in\K$, let $\mu_k$ and $\nu_k$ be two distributions on $\Z_k$. Let $\mu$ and $\nu$ be the respective product distributions. Let $\delta_k(f(z))=\sup_{z_k,z_{-k},z'_k}\left|f(z_k,z_{-k})-f(z'_k,z_{-k})\right|$. Then:
	
	\[\left|\E_{z\sim \mu}f(z)-\E_{z\sim \nu}f(z)\right|\leq\sum_{k \in \K}TV(\mu_k,\nu_k)\delta_k(f).\]
	
\end{lemma}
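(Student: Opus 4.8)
The plan is to proceed by a standard hybrid (telescoping) argument, swapping the marginals $\mu_k$ for $\nu_k$ one coordinate at a time. Fix an ordering $\K=\{1,\dots,K\}$ and define, for $0\le j\le K$, the hybrid product distribution $\eta_j=\prod_{k\le j}\nu_k\times\prod_{k> j}\mu_k$, so that $\eta_0=\mu$ and $\eta_K=\nu$. Writing the difference as a telescoping sum
\[
\E_{z\sim\mu}f(z)-\E_{z\sim\nu}f(z)=\sum_{j=1}^{K}\left(\E_{z\sim\eta_{j-1}}f(z)-\E_{z\sim\eta_j}f(z)\right),
\]
each summand changes only the $j$-th marginal, from $\mu_j$ to $\nu_j$, while all other coordinates carry a common distribution under $\eta_{j-1}$ and $\eta_j$.

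The key single-coordinate estimate I would establish is that, for any bounded $g:\Z_j\to\R$,
\[
\left|\E_{z_j\sim\mu_j}g(z_j)-\E_{z_j\sim\nu_j}g(z_j)\right|\le\left(\sup g-\inf g\right)TV(\mu_j,\nu_j).
\]
To prove this, I note that $\mu_j-\nu_j$ is a signed measure that integrates constants to zero, so $g$ may be re-centered to take values in $[-\tfrac12\,\mathrm{osc}(g),\tfrac12\,\mathrm{osc}(g)]$ without changing the integral; bounding $\left|\int g\,\d(\mu_j-\nu_j)\right|$ by $\tfrac12\,\mathrm{osc}(g)$ times the total mass $\int\d|\mu_j-\nu_j|=2\,TV(\mu_j,\nu_j)$ (using the convention $TV=\sup_A|\mu_j(A)-\nu_j(A)|$ fixed in the text) makes the factor of two cancel and yields exactly $\mathrm{osc}(g)\cdot TV(\mu_j,\nu_j)$.

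To combine the two ingredients, I condition on the coordinates other than $j$. Since $\eta_{j-1}$ and $\eta_j$ induce the same marginal on $z_{-j}$, I can write
\[
\E_{z\sim\eta_{j-1}}f-\E_{z\sim\eta_j}f=\E_{z_{-j}}\left[\E_{z_j\sim\mu_j}f(z_j,z_{-j})-\E_{z_j\sim\nu_j}f(z_j,z_{-j})\right].
\]
For each fixed $z_{-j}$ the inner function $g(\cdot)=f(\cdot,z_{-j})$ satisfies $\mathrm{osc}(g)\le\delta_j(f)$ by definition of $\delta_j$, so the single-coordinate estimate gives an inner bound $\delta_j(f)\,TV(\mu_j,\nu_j)$ that is uniform in $z_{-j}$. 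Applying Jensen's inequality to the outer expectation preserves this bound, giving $\left|\E_{\eta_{j-1}}f-\E_{\eta_j}f\right|\le\delta_j(f)\,TV(\mu_j,\nu_j)$, and summing over $j$ with the triangle inequality on the telescoping sum delivers the claim.

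I expect the only delicate point to be the constant in the single-coordinate estimate, which hinges entirely on the normalization convention for total variation (here the supremum over measurable sets, so that $\int\d|\mu_j-\nu_j|=2\,TV(\mu_j,\nu_j)$); getting this factor right is what produces a sharp bound with no spurious factor of two. The rest is bookkeeping for the hybrid decomposition, and the hypothesis $f\in[m,M]$ enters only to guarantee $\delta_j(f)\le M-m<\infty$ so that every quantity above is well defined.
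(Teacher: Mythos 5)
Your proposal is correct and follows essentially the same route as the paper: the paper's proof is an induction on $|\K|$ whose unrolling is exactly your hybrid telescoping, and its base case is your single-coordinate estimate, obtained by the same recentering of $f$ together with the variational characterization $TV(\mu_1,\nu_1)=\frac{1}{2}\max_{|h|\leq 1}\left|\E_{\mu_1}(h)-\E_{\nu_1}(h)\right|$, which is equivalent to your total-mass identity and involves the same care about the factor of two. The paper's inductive step likewise swaps one coordinate while the others carry a common law, using $\delta_1(\widetilde{f})\leq\delta_1(f)$ for $\widetilde{f}(z_1)=\E_{z_{2:n}\sim\nu_{2:n}}f(z)$, which is precisely your observation that the oscillation of $f(\cdot,z_{-j})$ is bounded by $\delta_j(f)$ uniformly in $z_{-j}$.
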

\begin{proof}
	We prove Lemma \ref{lemma} by induction. Note that \[TV(\mu,\nu)=\frac{1}{2}\max_{|h|\leq 1}\left|\E_\mu(h)-\E_\nu(h)\right|\] is an equivalent formulation of the total variation distance \citep{RN76}. For $|\K|=1$, we have that
	\begin{align*}
		\left|\E_{\mu_1}(f)-\E_{\nu_1}(f)\right|&=\left|\E_{\mu_1}\left(f-\frac{M+m}{2}\right)-\E_{\nu_1}\left(f-\frac{M+m}{2}\right)\right|\\
		&=\frac{M-m}{2}\left|\E_{\mu_1}\left(\frac{2f}{M-m}-\frac{M+m}{M-m}\right)-\E_{\nu_1}\left(\frac{2f}{M-m}-\frac{M+m}{M-m}\right)\right|
	\end{align*}
\begin{align*}
		&\leq \frac{M-m}{2}\max_{|h|\leq 1}\left|\E_{\mu_1}(h)-\E_{\nu_1}(h)\right|\\
		&=TV({\mu_1},{\nu_1})\delta_1(f).
	\end{align*}
	As induction assumption, assume that Lemma \ref{lemma} holds for $|\K|-1$. Then:
	\begin{align*}
		\left|\E_{z\sim\mu}f(z)-\E_{z\sim\nu}f(z)\right|&=\left|\E_{z_1\sim\mu_1}\E_{z_{2:n}\sim\mu_{2:n}}f(z)-\E_{z_1\sim\nu_1}\E_{z_{2:n}\sim\nu_{2:n}}f(z)\right|\\
		&\leq \left|\E_{z_1\sim\mu_1}\E_{z_{2:n}\sim\mu_{2:n}}f(z)-\E_{z_1\sim\mu_1}\E_{z_{2:n}\sim\nu_{2:n}}f(z)\right| \\
		&\quad +\left|\E_{z_1\sim\mu_1}\E_{z_{2:n}\sim\nu_{2:n}}f(z)-\E_{z_1\sim\nu_1}\E_{z_{2:n}\sim\nu_{2:n}}f(z)\right|\\
		&\leq\E_{z_1\sim\mu_1} \left|\E_{z_{2:n}\sim\mu_{2:n}}f(z)-\E_{z_{2:n}\sim\nu_{2:n}}f(z)\right| \\
		&\quad +\left|\E_{z_1\sim\mu_1}\widetilde{f}(z_1)-\E_{z_1\sim\nu_1}\widetilde{f}(z_1)\right|.
	\end{align*}
	where $\widetilde{f}(z_1)=\E_{z_{2:n}\sim\nu_{2:n}}f(z)$. By induction assumption:
	
	\[\left|\E_{z_{2:n}\sim\mu_{2:n}}f(z)-\E_{z_{2:n}\sim\nu_{2:n}}f(z)\right|\leq\sum_{k\neq 1 \in \K}TV(\mu_k,\nu_k)\delta_k(f(z_1,\cdot))\leq\sum_{k\neq 1 \in \K}TV(\mu_k,\nu_k)\delta_k(f).\]
	
	Since
	\begin{align*}
		\delta_1(\widetilde{f})&=\sup_{z_1,z_1'}\left|\E_{z_{2:n}\sim\nu_{2:n}}f(z_1, z_{2:n}) - \E_{z_{2:n} \sim\nu_{2:n}}f(z_1', z_{2:n})\right|\\
		&\leq\sup_{z_1,z_1'}\E_{z_{2:n}\sim\nu_{2:n}}\left|f(z_1, z_{2:n}) - f(z_1', z_{2:n})\right|\\
		&\leq\sup_{z_1,z_1'}\sup_{z_{2:n}}\left|f(z_1, z_{2:n}) - f(z_1', z_{2:n})\right|=\delta_1(f),
	\end{align*}
	we have 
	\begin{align*}
		\left|\E_{z\sim\mu}f(z)-\E_{z\sim\nu}f(z)\right|&\leq \E_{z_1\sim\mu_1}\sum_{k\neq 1 \in \K}TV(\mu_k,\nu_k)\delta_k(f) + TV({\mu_1},{\nu_1})\delta_1(f)\\
		&\leq \sum_{k \in \K}TV(\mu_k,\nu_k)\delta_k(f),
	\end{align*}
which concludes the induction.
\end{proof}

\begin{lemma}
	\label{lemma:3}
	Consider a Markov Chain with state $z\in\Z$,  where $\Z=\prod_{k \in \K}\Z_k$ and $\K$ is defined as in Section \ref{setting}. Suppose its transition probability factorizes as
	\[P(z(t+1)|z(t))=\prod_{k\in \K}P_k(z_k(t+1)|z(t)).\]
	Let $C\in\R^{K\times K}$ be a matrix whose elements respect the condition \[C_{ij}\geq\sup_{z_j,z_{-j},z'_j} TV(P_i(\cdot|z_j,z_{-j}),P_i(\cdot|z'_j,z_{-j})).\] If $\sum_{j \in \K}e^{\beta d(j,k)}C_{kj}\leq\rho$, then, $\forall \J\subseteq\K$,
	\[\sup_{z_j,z_{-j},z'_j}TV(P_i(\cdot|z_J,z_{-J}),P_i(\cdot|z'_J,z_{-J}))\leq \sum_{j \in \J}C_{ij}\]
	and 
	\[\sup_{z_J,z_{-J},z'_J}TV(P_i(\cdot|z_J,z_{-J}),P_i(\cdot|z'_J,z_{-J}))\leq \rho e^{-\beta d(\J,i)},\]
	where $d(\J,i)=\min_{j \in \J} d(j,i)$.
\end{lemma}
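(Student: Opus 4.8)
The plan is to reduce the simultaneous perturbation of the whole block $z_\J$ to a sequence of single-coordinate perturbations, for each of which the hypothesis already supplies the bound $C_{ij}$, and then to combine these through the triangle inequality for total variation. The second inequality will follow from the first by a purely algebraic manipulation that exploits the exponential weighting in the Dobrushin-type condition.

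For the first inequality, I would enumerate $\J = \{j_1, \ldots, j_m\}$ and interpolate between the configuration $(z_\J, z_{-\J})$ and $(z'_\J, z_{-\J})$ one coordinate at a time. Concretely, define intermediate states $w^{(\ell)}$ that agree with $z'$ on $\{j_1, \ldots, j_\ell\}$, with $z$ on $\{j_{\ell+1}, \ldots, j_m\}$, and with $z_{-\J}$ elsewhere, so that $w^{(0)} = (z_\J, z_{-\J})$ and $w^{(m)} = (z'_\J, z_{-\J})$. Passing from $w^{(\ell-1)}$ to $w^{(\ell)}$ changes only the single coordinate $j_\ell$ while holding every other coordinate fixed; the definition of $C_{i j_\ell}$ as a supremum over all configurations of the remaining coordinates therefore guarantees $TV(P_i(\cdot | w^{(\ell-1)}), P_i(\cdot | w^{(\ell)})) \leq C_{i j_\ell}$. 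Summing over $\ell$ and invoking the triangle inequality for $TV$ gives $TV(P_i(\cdot | z_\J, z_{-\J}), P_i(\cdot | z'_\J, z_{-\J})) \leq \sum_{\ell=1}^m C_{i j_\ell} = \sum_{j\in\J} C_{ij}$, and since this bound is independent of the fixed values, taking the supremum over $z_\J, z_{-\J}, z'_\J$ yields the first claim.

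For the second inequality, I would start from $\sum_{j\in\J} C_{ij}$ and factor out the exponential weight of the closest agent. Since $d(j,i) \geq d(\J,i)$ for every $j \in \J$ and $\beta \geq 0$, we have $e^{-\beta d(j,i)} \leq e^{-\beta d(\J,i)}$, so that
\[ \sum_{j\in\J} C_{ij} = \sum_{j\in\J} e^{-\beta d(j,i)} e^{\beta d(j,i)} C_{ij} \leq e^{-\beta d(\J,i)} \sum_{j\in\J} e^{\beta d(j,i)} C_{ij} \leq e^{-\beta d(\J,i)} \sum_{j\in\K} e^{\beta d(j,i)} C_{ij}. \]
The Dobrushin-type hypothesis $\sum_{j\in\K} e^{\beta d(j,i)} C_{ij} \leq \rho$ (using symmetry of the graph distance, $d(j,i) = d(i,j)$) then bounds the last sum by $\rho$, giving $\rho e^{-\beta d(\J,i)}$.

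I do not expect a genuine obstacle here: the argument is essentially a telescoping estimate, and both inequalities are elementary once the interpolation is in place. The one point that requires care is making sure each single-coordinate step is legitimately controlled by $C_{i j_\ell}$ no matter how the already-switched and not-yet-switched coordinates happen to be set; this is exactly why the definition of $C_{ij}$ takes a supremum over $z_j, z_{-j}, z'_j$, and keeping the bookkeeping of the intermediate configurations $w^{(\ell)}$ clean is the only thing to watch.
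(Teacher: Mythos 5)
Your proposal is correct and is essentially the paper's argument: the paper proves the first inequality by induction on $|\J|$, which is exactly your one-coordinate-at-a-time telescoping unrolled, with each step bounded by $C_{ij_\ell}$ via the triangle inequality for total variation, and its second inequality uses the same factoring of $e^{-\beta d(\J,i)}$ out of the weighted sum before invoking the Dobrushin condition. No gaps; the point you flag about the supremum in the definition of $C_{ij}$ licensing each intermediate configuration is precisely what makes the inductive/telescoping step valid in the paper as well.
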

\begin{proof}
	We prove the first claim of Lemma \ref{lemma:3} by induction. The first claim clearly holds if $|\J|=1$. As induction assumption, assume that the first claim holds for a generic $\J$. Then, it holds for $\J'=\J + \{k\}$:
	\begin{align*}
		&\sup_{z_j,z_{-j},z'_j}TV(P_i(\cdot|z_{J'},z_{-J'}),P_i(\cdot|z'_{J'},z_{-J'}))= \sup_{\substack{A\subseteq\Z_i\\z_j,z_{-j},z'_j}}\left|P_i(A|z_{J'},z_{-J'})-P_i(A|z'_{J'},z_{-J'})\right|\\
		&\leq \sup_{\substack{A\subseteq\Z_i\\z_j,z_{-j},z'_j}}\left|P_i(A|z_{J'},z_{-J'})-P_i(A|z'_{J},z_{-J})\right|+ \sup_{\substack{A\subseteq\Z_i\\z_j,z_{-j},z'_j}}\left|P_i(A|z'_{J},z_{-J})-P_i(A|z'_{J'},z_{-J'})\right|\\
		&\leq\sum_{j \in \J}C_{ij} + C_{ik}= \sum_{j \in \J'}C_{ij}.
	\end{align*}
	
	The second claim follows immediately, since
	\[e^{\beta d(\J,i)}\sum_{j \in \J}C_{ij}\leq\sum_{j \in \J}e^{\beta d(j,i)}C_{ij}\leq\sum_{j \in \K}e^{\beta d(j,i)}C_{ij}\leq\rho,\]
	and
	\[\sum_{j \in \J}C_{ij}\leq\rho e^{-\beta d(\J,i)}.\]
\end{proof}

\begin{lemma}
	\label{lemma:2}
	Consider the setting of Lemma \ref{lemma:3}. For a generic value of $r$, denote by $d_{t}$ and $\widetilde{d}_{t}$ the distribution of $z(t)$ with starting state, respectively, $z=(z_{N_k^r},z_{N_{-k}^r})$ and $\widetilde{z}=(z_{N_{k}^r},\widetilde{z}_{N_{-k}^r})$. Then, if $\sum_{j \in \K}e^{\beta d(j,k)}C_{kj}\leq\rho$, we have that $TV(d_{t,k},\widetilde{d}_{t,k})\leq \rho^t e^{-\beta r}$, $\forall k \in\K$. 
\end{lemma}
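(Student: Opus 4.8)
The plan is to establish a sharper, coordinate-resolved bound by induction on $t$ and then read off the claim at the coordinate $k$. Write $S=N_{-k}^r$ for the set of coordinates on which the two initial configurations $z$ and $\widetilde z$ are allowed to differ, and set $d(i,S)=\min_{j\in S}d(i,j)$ (with $d(i,\emptyset)=+\infty$, in which case $z=\widetilde z$ and there is nothing to prove). I claim that, under an appropriate coupling of the two chains, the disagreement probabilities $p_{t,i}:=\P(z_i(t)\neq\widetilde z_i(t))$ satisfy $p_{t,i}\le\rho^t e^{-\beta d(i,S)}$ for all $i\in\K$ and $t\ge 0$. Since $TV(d_{t,k},\widetilde d_{t,k})\le p_{t,k}$ and $d(k,S)=d(k,N_{-k}^r)\ge r+1\ge r$, specializing to $i=k$ gives $TV(d_{t,k},\widetilde d_{t,k})\le\rho^t e^{-\beta r}$, as required.

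The main obstacle is that one cannot simply induct on the coordinatewise marginal distances $TV(d_{t,j},\widetilde d_{t,j})$ using Lemma \ref{lemma}: that lemma is stated for product distributions, whereas $d_t$ is only a product at $t\le 1$ and becomes a genuine mixture of products for $t\ge 2$, for which the corresponding marginal recursion is in fact false. I would circumvent this by passing to disagreement probabilities under a Markovian coupling. Concretely, given the pair $(z(t),\widetilde z(t))$, I update each coordinate $i$ independently, drawing $(z_i(t+1),\widetilde z_i(t+1))$ from the maximal coupling of $P_i(\cdot\mid z(t))$ and $P_i(\cdot\mid\widetilde z(t))$; the factorization $P(\cdot\mid z)=\prod_i P_i(\cdot\mid z)$ guarantees that each chain remains marginally correct. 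This replaces the fragile marginal-TV bookkeeping with additive disagreement bookkeeping that tolerates correlations.

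With this coupling, the key step is a one-step recursion for $p_{t,i}$. Conditioning on $(z(t),\widetilde z(t))$, the maximal coupling of coordinate $i$ disagrees with probability exactly $TV(P_i(\cdot\mid z(t)),P_i(\cdot\mid\widetilde z(t)))$; since the two states differ only on the random set $D_t=\{j:z_j(t)\neq\widetilde z_j(t)\}$, the first claim of Lemma \ref{lemma:3} bounds this by $\sum_{j\in D_t}C_{ij}$. Taking expectations and using $\P(j\in D_t)=p_{t,j}$ yields the linear recursion $p_{t+1,i}\le\sum_{j\in\K}C_{ij}\,p_{t,j}$. It remains to run the induction. The base case holds because $p_{0,j}=\1[z_j\neq\widetilde z_j]$ is zero for $j\in N_k^r$ and at most $1=e^{-\beta d(j,S)}$ for $j\in S$. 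For the inductive step, I feed the hypothesis into the recursion and apply the triangle inequality $d(j,S)\ge d(i,S)-d(i,j)$ to obtain
\[
p_{t+1,i}\le\rho^t e^{-\beta d(i,S)}\sum_{j\in\K}e^{\beta d(i,j)}C_{ij}\le\rho^{t+1}e^{-\beta d(i,S)},
\]
where the final inequality is precisely the Dobrushin-type hypothesis $\sum_{j\in\K}e^{\beta d(j,i)}C_{ij}\le\rho$. Once the product-structure difficulty is resolved by coupling, this decay-of-correlation bookkeeping closes the argument.
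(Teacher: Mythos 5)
Your proof is correct, and it takes a genuinely different route from the paper's. The paper argues by induction directly on marginal total variation distances: it writes $d_{t+1,k}$ as the law of a one-step transition from $z\sim d_t$ and invokes Lemma~\ref{lemma} to get $TV(d_{t+1,k},\widetilde d_{t+1,k})\leq \sum_{j\in\K}TV(d_{t,j},\widetilde d_{t,j})C_{kj}$, then inserts a strengthened, coordinate-resolved induction hypothesis (its display silently uses $TV(d_{t,j},\widetilde d_{t,j})\leq \rho^t e^{-\beta(r-d(j,k))}$ rather than the lemma's stated conclusion) and closes with the same Dobrushin bookkeeping you use. You instead track disagreement probabilities $p_{t,i}$ under a Markovian coordinatewise maximal coupling, obtaining $p_{t+1,i}\leq\sum_{j\in\K}C_{ij}p_{t,j}$ from the first claim of Lemma~\ref{lemma:3} and finishing via $TV(d_{t,k},\widetilde d_{t,k})\leq p_{t,k}$. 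Your objection to the paper's route is well taken: Lemma~\ref{lemma} is proved only for \emph{product} measures, while $d_t$ and $\widetilde d_t$ are not products for $t\geq 2$ (coordinates are correlated through the shared history), so the paper's inductive step is not literally licensed by Lemma~\ref{lemma}; indeed the inequality $|\E_\mu f-\E_\nu f|\leq\sum_j TV(\mu_j,\nu_j)\delta_j(f)$ can fail for correlated measures with matching marginals (take $f=\1\{z_1=z_2\}$, $\mu$ the law of $(X,X)$ and $\nu$ that of $(X,Y)$ with $X,Y$ i.i.d.\ uniform). Your coupling repairs this — disagreement bookkeeping is additive regardless of correlations — and it makes explicit the stronger statement $p_{t,i}\leq\rho^t e^{-\beta d(i,S)}$ that the paper's induction only carries tacitly; it even yields the marginally sharper exponent $e^{-\beta(r+1)}$ since $d(k,N^r_{-k})\geq r+1$. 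What the paper's route would buy, were the product structure available, is that no coupling construction is needed; what yours buys is a rigorous argument in the presence of correlations, at the harmless cost of the slack in $TV(d_{t,k},\widetilde d_{t,k})\leq p_{t,k}$.
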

\begin{proof}
	We prove Lemma \ref{lemma:2} by induction. The case where $t=1$ follows from Lemma \ref{lemma:3}. As induction assumption, assume that Lemma \ref{lemma:2} holds for $t$. Then,
	\begin{align*}
		&\left|\E_{s\sim d_{t+1,k}(s)}\1_A(s) -\E_{s\sim\widetilde{d}_{t+1,k}}\1_A(s)\right|\\
		&=\left|\E_{z\sim d_{t}}E_{s\sim P_k(\cdot|z)}\1_A(s)-\E_{z\sim \widetilde{d}_{t}}\E_{s\sim P_k(\cdot|z)}\1_A(s)\right|\\
		&\leq \sum_{j \in \K}TV(d_{t,j},\widetilde{d}_{t,j})\delta_j(E_{s\sim P_k(\cdot|\cdot)}\1_A(s))\\
		&\leq \sum_{j \in \K}TV(d_{t,j},\widetilde{d}_{t,j})C_{kj}\\
		&\leq \sum_{j \in \K} \rho^te^{-\beta(r-d(j,k))}C_{kj}\\
		&=  \rho^te^{-\beta r}\sum_{j \in \K}e^{\beta d(j,k)}C_{kj}\leq \rho^{t+1} e^{-\beta r},
	\end{align*}
	where we used Lemma \ref{lemma} in the first inequality.
\end{proof}

\begin{lemma}
	\label{lemma:value}
	Consider the setting of Lemma \ref{lemma:3}. Let $P^t(z'|z)= P(z(t)=z'|z(0)=z)$ and
	\[\delta_iP^t_k=\sup_{z_i,z_{-i},z'_i}TV(P_k^t(\cdot|z_i,z_{-i}),P_k^t(\cdot|z'_i,z_{-i})).\]
	If $\sum_{j \in \K}e^{\beta d(j,k)}C_{kj}\leq\rho$, then $\forall k \in\K$
	\[\sum_{i \in \K}e^{\beta d(k,i)}\delta_iP^t_k\leq\rho^t.\]
\end{lemma}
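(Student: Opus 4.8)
The plan is to prove the claim by induction on $t$, mirroring the structure of Lemmas \ref{lemma:3} and \ref{lemma:2}. The engine of the argument is a one-step recursion for the sensitivities $\delta_i P^t_k$ obtained from the Markov property together with Lemma \ref{lemma}. Throughout I fix the agent $k$ and treat $(\delta_i P^t_k)_{i\in\K}$ as the object that evolves with $t$, and I use repeatedly that the graph distance is symmetric, $d(k,i)=d(i,k)$, and that the hypothesis $\sum_{j}e^{\beta d(j,k)}C_{kj}\leq\rho$ holds at every agent.

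First I would dispatch the base case $t=1$. Since $P^1_k(\cdot|z)=P_k(\cdot|z)$, the definition of $\delta_i P^1_k$ together with the defining property of $C$ gives $\delta_i P^1_k\leq C_{ki}$, so that $\sum_{i}e^{\beta d(k,i)}\delta_i P^1_k\leq\sum_{i}e^{\beta d(k,i)}C_{ki}\leq\rho$ directly by hypothesis.

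The key step is the recursion. Conditioning on the first transition and using time-homogeneity, $P^t_k(A|z)=\E_{w\sim P(\cdot|z)}[P^{t-1}_k(A|w)]$, where $P(\cdot|z)=\prod_{j\in\K}P_j(\cdot|z)$ is a product distribution. Fixing two starting states $z,z'$ that differ only in coordinate $i$ and applying Lemma \ref{lemma} to the bounded function $w\mapsto P^{t-1}_k(A|w)\in[0,1]$, I bound $TV(P_j(\cdot|z),P_j(\cdot|z'))\leq C_{ji}$ and the coordinate-$j$ oscillation of $w\mapsto P^{t-1}_k(A|w)$ by $\delta_j P^{t-1}_k$ uniformly in $A$; taking the supremum over $A$ then yields $\delta_i P^t_k\leq\sum_{j\in\K}\delta_j P^{t-1}_k\,C_{ji}$. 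To close the induction, I multiply by $e^{\beta d(k,i)}$, exchange the order of summation, and factor $e^{\beta d(k,i)}\leq e^{\beta d(k,j)}e^{\beta d(j,i)}$ via the triangle inequality, so the inner sum $\sum_i e^{\beta d(j,i)}C_{ji}$ is bounded by $\rho$ (Dobrushin hypothesis at agent $j$). This leaves $\sum_i e^{\beta d(k,i)}\delta_i P^t_k\leq\rho\sum_j e^{\beta d(k,j)}\delta_j P^{t-1}_k\leq\rho\cdot\rho^{t-1}=\rho^t$ by the inductive hypothesis.

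The main obstacle is the recursion step: one must invoke Lemma \ref{lemma} for a function that depends on the measurable set $A$, and control its coordinatewise oscillation uniformly in $A$ by the total-variation sensitivity $\delta_j P^{t-1}_k$, so that the resulting inequality persists after taking the supremum over $A$ that defines the total variation distance. Once the recursion $\delta_i P^t_k\leq\sum_{j}\delta_j P^{t-1}_k\,C_{ji}$ is in hand, the remaining arithmetic—symmetry of $d$ and the triangle inequality inside the exponential weight—is routine and identical in spirit to the second claim of Lemma \ref{lemma:3}.
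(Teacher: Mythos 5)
Your proof is correct, and it takes a genuinely different route from the paper's at the decisive step. Both arguments share the same skeleton---induction on $t$, the base case $\sum_i e^{\beta d(k,i)}\delta_i P^1_k=\sum_i e^{\beta d(k,i)}C_{ki}\leq\rho$, and exponential-weight bookkeeping via $d(k,i)\leq d(k,j)+d(j,i)$---but you condition on the \emph{first} transition, writing $P^t_k(A|z)=\E_{w\sim P(\cdot|z)}P^{t-1}_k(A|w)$ and obtaining the recursion
\[\delta_i P^t_k\leq\sum_{j\in\K}C_{ji}\,\delta_j P^{t-1}_k,\]
whereas the paper conditions on the \emph{last} transition, writing $P^{t+1}_k(A|z)=\E_{x\sim P^t(\cdot|z)}\E_{s\sim P_k(\cdot|x)}\1_A(s)$ and obtaining the transposed recursion
\[\delta_i P^{t+1}_k\leq\sum_{j\in\K}C_{kj}\,\delta_i P^t_j.\]
Accordingly, you invoke the Dobrushin row bound at agent $j$ for the inner sum over $i$ (after using the symmetry of $d$), while the paper factors out $\sum_j e^{\beta d(k,j)}C_{kj}\leq\rho$ and applies the inductive hypothesis at agent $j$; the arithmetic is a mirror image and both close identically. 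Your ordering buys a concrete gain in rigor: Lemma \ref{lemma} as stated applies only to \emph{product} distributions, and in your recursion it is applied to the one-step kernels $P(\cdot|z)$ and $P(\cdot|z')$, which genuinely factorize by the setting of Lemma \ref{lemma:3}, with the coordinate-$j$ oscillation of $w\mapsto P^{t-1}_k(A|w)$ bounded by $\delta_j P^{t-1}_k$ uniformly in $A$---exactly the point you flag as the crux, and it holds because total variation is itself a supremum over sets. The paper's version instead compares $P^t(\cdot|z)$ with $P^t(\cdot|z')$ through Lemma \ref{lemma}, but for $t\geq 2$ the $t$-step law is a mixture of products rather than a product, so that application implicitly relies on an extension of Lemma \ref{lemma} beyond its stated hypotheses (a Dobrushin-type comparison theorem). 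Your first-step decomposition sidesteps this subtlety entirely, at no cost in generality or in the final bound $\sum_i e^{\beta d(k,i)}\delta_i P^t_k\leq\rho^t$.
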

\begin{proof}
	We prove Lemma \ref{lemma:value} by induction. The claim holds for $t=1$:
	\[\sum_{i \in \K}e^{\beta d(k,i)}\delta_iP^t_k=\sum_{i \in \K}e^{\beta d(k,i)}C_{k,i}\leq\rho\]
	As induction assumption, we assume that the claim holds for $t$. Then, using Lemma \ref{lemma},
	\begin{align*}
		\delta_iP^{t+1}_k&=\sup_{\substack{A\subseteq\S_k\\z_i,z_{-i},z'_i}}\left|\E_{s\sim P_k^{t+1}(\cdot|z_i,z_{-i})}\1_A(s) -\E_{s\sim P_k^{t+1}(\cdot|z'_i,z_{-i})}\1_A(s)\right|\\
		& =\sup_{\substack{A\subseteq\S_k\\z_i,z_{-i},z'_i}}\left|\E_{x\sim P^{t}(\cdot|z_i,z_{-i})}E_{s\sim P_k(\cdot|x)}\1_A(s)-\E_{x\sim P^{t}(\cdot|z'_i,z_{-i})}\E_{s\sim P_k(\cdot|x)}\1_A(s)\right|\\
		&\leq \sup_{z_i,z_{-i},z'_i}\sum_{j \in \K}TV(P^{t}_j(\cdot|z_i,z_{-i}),P^{t}_j(\cdot|z'_i,z_{-i}))\delta_j(E_{s\sim P_k(\cdot|\cdot)}\1_A(s))\\
		&\leq \sum_{j \in \K}\delta_iP^t_jC_{kj}
	\end{align*}
	and, using the inverse triangle inequality,
	\begin{align*}
		\sum_{i \in \K}e^{\beta d(k,i)}\delta_iP^{t+1}_k&\leq\sum_{i \in \K}e^{\beta d(k,i)}\sum_{j \in \K}\delta_iP^t_jC_{kj}\\
		&\leq\sum_{j \in \K}e^{\beta d(k,j)}C_{kj}\sum_{i \in \K}e^{\beta(d(k,i)- d(k,j))}\delta_iP^t_j\\
		&\leq\sum_{j \in \K}e^{\beta d(k,j)}C_{kj}\sum_{i \in \K}e^{\beta d(j,i)}\delta_iP^t_j\leq\rho^{t+1},
	\end{align*}
	which concludes the induction.
\end{proof}

\subsection{Main Result}
\begin{proof}[of Proposition \ref{prop:decay}]
	The following holds for every $k \in\K$. Let $s,\widetilde{s}\in\S$, $a,\widetilde{a}\in\A$ be such that $s_{N^r_k} = \widetilde{s}_{N^r_k}$ and $a_{N^r_k} = \widetilde{a}_{N^r_k}$. Notice that
	\begin{align*}
		&\left|Q_k^\pi(s,a)-Q_k^\pi(\widetilde{s},\widetilde{a})\right|\\
		&\leq\sum_{t=0}^\infty\gamma^t\left|\E\left[ r_k(s_k(t), a_k(t))\big|\pi, s(0)=s, a(0)=a\right]-\E\left[  r_k(s_k(t), a_k(t))\big|\pi, s(0)=\widetilde{s}, a(0)=\widetilde{a}\right]\right|\\
		&\leq\sum_{t=1}^\infty\gamma^t\left|\E\left[ r_k(s_k(t), a_k(t))\big|\pi, s(0)=s, a(0)=a\right]-\E\left[  r_k(s_k(t), a_k(t))\big|\pi, s(0)=\widetilde{s}, a(0)=\widetilde{a}\right]\right|\\
		&\leq\sum_{t=1}^\infty\gamma^tTV(d_{t,k},\widetilde{d}_{t,k}),
	\end{align*}
	where $d_{t,k}(s_k,a_k)$ and $\widetilde{d}_{t,k}(s_k,a_k)$ are the distributions of $s_k,a_k$ at time $t$ with starting point $(s,a)$ and $(\widetilde{s},\widetilde{a})$, respectively. We use Lemma \ref{lemma:2} to bound $TV(d_{t,k},\widetilde{d}_{t,k})$. The structure of our MDP implies that:
	\[P(s(t+1),a(t+1)|s(t),a(t))=\prod_{k\in \K}\pi^k(a_k(t+1)|s(t+1))P_k(s_k(t+1)|s(t),a(t)).\]
	Let $C$ be defined as in Assumption \ref{ass:dobr} and note that
	\begin{align*}
		C_{kj}&=\sup_{s_j,s_{-j},a_j,a_{-j}, s'_j,a'_j} TV(P_k(\cdot|s_j,s_{-j},a_j,a_{-j}),P_k(\cdot|s'_j,s_{-j},a'_j,a_{-j}))\\&\geq\sup_{s_j,s_{-j},a_j,a_{-j}, s'_j,a'_j} TV(P_k(\cdot,\cdot|s_j,s_{-j},a_j,a_{-j}),P_k(\cdot,\cdot|s'_j,s_{-j},a'_j,a_{-j})).
	\end{align*}
	Then, if Assumption \ref{ass:dobr} holds, the requirements of Lemma \ref{lemma:2} are satisfied. Therefore, $TV(d_{t,k},\widetilde{d}_{t,k})\leq \rho^t e^{-\beta r}$ and
	\[	\left|Q_k^\pi(s,a)-Q_k^\pi(\widetilde{s},\widetilde{a})\right|\leq\sum_{t=1}^\infty\gamma^tTV(d_{t,k},\widetilde{d}_{t,k})\leq e^{-\beta r}\sum_{t=1}^\infty\gamma^t\rho^t=\frac{\gamma\rho e^{-\beta r}}{1-\gamma\rho}.\]
	
	We use this result to prove the exponential decay property for the value function. Let
	\[\delta_jQ_k^\pi(s,a)=\sup_{s_j,s_{-j},a_j,a_{-j}, s'_j,a'_j}|Q_k^\pi(s_j,s_{-j},a_j,a_{-j})-Q_k^\pi(s'_j,s_{-j},a'_j,a_{-j})|.\]
	Using Lemma \ref{lemma} and Assumption \ref{ass:pol}, we have that	
	\begin{align*}
		&\left|V_k^\pi(s)-V_k^\pi(\widetilde{s})\right|=\left|\E_{a \sim \pi(\cdot|s)}Q_k^\pi(s,a)-\E_{a \sim \pi(\cdot|\widetilde{s})}Q_k^\pi(\widetilde{s},a)\right|\\
		&\leq \left|\E_{a \sim \pi(\cdot|s)}Q_k^\pi(s,a)-\E_{a \sim \pi(\cdot|\widetilde{s})}Q_k^\pi(s,a)\right| +\left|\E_{a \sim \pi(\cdot|\widetilde{s})}Q_k^\pi(s,a)-\E_{a \sim \pi(\cdot|\widetilde{s})}Q_k^\pi(\widetilde{s},a)\right|\\
		&\leq \sum_{i\in\K}TV(\pi_i(\cdot|s),\pi_i(\cdot|\widetilde{s}))\delta_iQ_k^\pi(s,a)+\frac{\gamma\rho e^{-\beta r}}{1-\gamma\rho}\\
		&\leq \xi e^{-\beta r}\sum_{i\in \K} e^{-\beta d(i,k)}\delta_iQ_k^\pi(s,a) +\frac{\gamma\rho e^{-\beta r}}{1-\gamma\rho}.
	\end{align*}
	We have already shown that the MDP satisfies the condition of Lemma \ref{lemma:value}. Using it we obtain
	\[\sum_{i \in \K}e^{\beta d(k,i)}\delta_i(Q^\pi_k(s,\cdot))\leq \sum_{t=1}^\infty\gamma^t\sum_{i \in \K}e^{\beta d(k,i)}\delta_iP^t_k\leq \sum_{t=1}^\infty\gamma^t\rho^t=\frac{\gamma\rho}{1-\gamma\rho},\]
	where
	\[\delta_iP^t_k=\sup_{s_j,s_{-j},a_j,a_{-j}, s'_j,a'_j}TV(P_k^t(\cdot,\cdot|s_j,s_{-j},a_j,a_{-j}),P_k^t(\cdot,\cdot|s'_j,s_{-j},a'_j,a_{-j})).\]
	Then, we have that
	\[\left|V_k^\pi(s)-V_k^\pi(\widetilde{s})\right|\leq\frac{\gamma\rho (1+\xi) e^{-\beta r}}{1-\gamma\rho}.\]
\end{proof}

\section{Decay for the Optimal Policy}
\label{app:ass}
\begin{lemma}
	\label{lemma:4}
	Assume that the exponential decay property holds for the Q-function with parameters $(c,\psi)$. Then the exponential decay property holds also for the value function associated with the optimal policy, with parameters $(3c,\psi)$.
\end{lemma}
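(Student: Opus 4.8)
The plan is to exploit the greedy structure of the optimal policy to reduce everything to the Q-decay hypothesis. Since the rewards are bounded and $\gamma$ is fixed, the optimal policy $\pi^\star$ may be taken deterministic, and it is greedy with respect to the global Q-function $Q^{\pi}=\frac1K\sum_{j\in\K}Q^{\pi}_j$; writing $a^\star(s)\in\argmax_{a\in\A}Q^{\pi^\star}(s,a)$, the Bellman optimality for the global value gives $V^{\pi^\star}_k(s)=Q^{\pi^\star}_k(s,a^\star(s))$ for every $k$. I would fix $k$ and states $s,\widetilde{s}$ with $s_{N^r_k}=\widetilde{s}_{N^r_k}$, and, using the symmetry $s\leftrightarrow\widetilde{s}$, reduce to bounding $V^{\pi^\star}_k(s)-V^{\pi^\star}_k(\widetilde{s})$ from above. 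The conceptual tension to keep in mind throughout is that only the \emph{global average} $Q^{\pi^\star}$ is maximized by $a^\star$, whereas the quantity we must control is the \emph{individual} term $Q^{\pi^\star}_k$.

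First I would use the exponential decay property to collapse the dependence of $Q^{\pi^\star}_k$ on everything outside $N^r_k$. Fixing an arbitrary reference completion $(s^0_{N^r_{-k}},a^0_{N^r_{-k}})$ and setting $\bar{Q}_k(s_{N^r_k},a_{N^r_k}):=Q^{\pi^\star}_k\big((s_{N^r_k},s^0_{N^r_{-k}}),(a_{N^r_k},a^0_{N^r_{-k}})\big)$, the decay property yields $|Q^{\pi^\star}_k(s,a)-\bar{Q}_k(s_{N^r_k},a_{N^r_k})|\le c\psi^{r+1}$ for \emph{every} completion, since the two arguments agree on $N^r_k$ in both state and action. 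Applying this at $(s,a^\star(s))$ and at $(\widetilde{s},a^\star(\widetilde{s}))$, and using $s_{N^r_k}=\widetilde{s}_{N^r_k}$, I obtain
\[
V^{\pi^\star}_k(s)-V^{\pi^\star}_k(\widetilde{s})\le 2c\psi^{r+1}+\bar{Q}_k\big(s_{N^r_k},(a^\star(s))_{N^r_k}\big)-\bar{Q}_k\big(s_{N^r_k},(a^\star(\widetilde{s}))_{N^r_k}\big).
\]
This accounts for two of the three factors of $c$, and reduces the whole statement to showing that the two greedy actions induce almost the same localized value for agent $k$, i.e.\ that the last difference is at most $c\psi^{r+1}$.

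The hard part will be precisely this last step, because it is where the mismatch between global optimality and the individual index $k$ must be reconciled: the local blocks $(a^\star(s))_{N^r_k}$ and $(a^\star(\widetilde{s}))_{N^r_k}$ need not coincide, and a naive appeal to greediness of $a^\star$ only controls the \emph{sum} $\sum_{j\in\K}Q^{\pi^\star}_j$, not the $k$-th term, while summing the per-agent decay bounds over $j\neq k$ fails (the agents close to $k$ contribute non-negligibly). To close it I would argue that, because $s$ and $\widetilde{s}$ agree on $N^r_k$, swapping the local block of one greedy action for that of the other is essentially free for the global objective (by greediness of $a^\star$ at each of the two states), while by the exponential decay property such a swap perturbs each agent's Q-function by an amount governed by its distance to $N^r_k$; balancing these two facts should isolate a single residual $c\psi^{r+1}$ for agent $k$. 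Assembling the three contributions gives $V^{\pi^\star}_k(s)-V^{\pi^\star}_k(\widetilde{s})\le 3c\psi^{r+1}$, and the reverse inequality follows by exchanging $s$ and $\widetilde{s}$, establishing the $(3c,\psi)$-exponential decay property for $V^{\pi^\star}_k$. I expect the delicate bookkeeping in this balancing step---guaranteeing that the surviving constant is exactly $c$ and that the base $\psi$ is preserved---to be the main obstacle.
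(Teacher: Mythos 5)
Your reduction through the reference completion $\bar{Q}_k$ is sound and delivers the first two factors $2c\psi^{r+1}$, but the step you leave open --- bounding $\bar{Q}_k\big(s_{N^r_k},(a^\star(s))_{N^r_k}\big)-\bar{Q}_k\big(s_{N^r_k},(a^\star(\widetilde{s}))_{N^r_k}\big)$ by $c\psi^{r+1}$ --- is a genuine gap, and the ``balancing'' you propose cannot close it. Greediness of $a^\star$ controls only the average $\frac{1}{K}\sum_{j\in\K}Q_j^{\pi^\star}$, while the exponential decay property says nothing useful about how swapping the block $a_{N^r_k}$ affects agents $j$ close to $N^r_k$: for such $j$ the two argument profiles agree only on a small neighborhood of $j$, so the per-agent perturbation is of order $c\psi^{d(j,N^r_k)}$, which is $O(1)$ for nearby agents (and unbounded by the hypothesis for $j\in N^r_k$). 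Consequently, near-indifference of the \emph{global} objective between the two blocks is perfectly compatible with the single summand $Q_k^{\pi^\star}$ moving by $O(1)$, compensated by its neighbors; and which side of such a tradeoff the global maximizer picks can genuinely depend on the far coordinates $\widetilde{s}_{N^r_{-k}}$. Nothing in the $(c,\psi)$-decay hypothesis excludes this, so no bookkeeping at base $\psi$ can rescue the step as you framed it --- the tension you correctly identify at the outset is real and is never resolved.

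The paper dissolves exactly that tension in its very first line, via the identity $V_k^\star(s)=\E_{a\sim\pi^\star(\cdot|s)}Q_k^\star(s,a)=\max_a Q_k^\star(s,a)$: it treats $\pi^\star$ as greedy for \emph{each individual} $Q_k^\star$, not merely for their average (note this per-agent greediness is asserted rather than derived there). Granting it, no comparison between local blocks of two \emph{different} global maximizers ever arises. One fixes an arbitrary tail $a_{N^r_{-k}}$ and inserts the restricted maxima $\max_{a_{N^r_k}}Q_k^\star(\cdot,a_{N^r_k},a_{N^r_{-k}})$ at both $s$ and $\widetilde{s}$; the two ``full max versus restricted max'' gaps are each at most $c\psi^{r+1}$ (take the full maximizer, overwrite its tail with $a_{N^r_{-k}}$, apply Q-decay), and the ``restricted max at $s$ versus at $\widetilde{s}$'' gap is at most $c\psi^{r+1}$ (evaluate the maximizer for one state at the other state; the full action profile is then shared and $s_{N^r_k}=\widetilde{s}_{N^r_k}$). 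If you want to salvage your route, import that same identity: your remaining difference then becomes a comparison of two restricted maxima of functions uniformly within $c\psi^{r+1}$ of each other and closes immediately. The missing idea is that identity --- not any balancing of global greediness against spatial decay.
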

\begin{proof}
	The following holds for every $k \in\K$. Let $s,\widetilde{s}\in\S$ be such that $s_{N^r_k} = \widetilde{s}_{N^r_k}$ and let $a_{N^r_{-k}}\in\A_{N^r_{-k}}$.
	\begin{align*}
		&\left|V_k^\star(s)-V_k^\star(\widetilde{s})\right|=\left|\E_{a \sim \pi^\star(\cdot|s)}Q_k^\star(s,a)-\E_{a \sim \pi^\star(\cdot|\widetilde{s})}Q_k^\star(\widetilde{s},a)\right|=\left|\max_aQ_k^\star(s,a)-\max_aQ_k^\star(\widetilde{s},a)\right|\\
		&=\left|\max_aQ_k^\star(s,a)-\max_{a_{N^r_k}}Q_k^\star(s,a_{N^r_k},a_{N^r_{-k}})+\max_{a_{N^r_k}}Q_k^\star(s,a_{N^r_k},a_{N^r_{-k}})\right.\\&\quad-\left.\max_aQ_k^\star(\widetilde{s},a)-\max_{a_{N^r_k}}Q_k^\star(\widetilde{s},a_{N^r_k},a_{N^r_{-k}})+\max_{a_{N^r_k}}Q_k^\star(\widetilde{s},a_{N^r_k},a_{N^r_{-k}})\right|\\
		&=\left|\max_{a_{N^r_k}}Q_k^\star(s,a_{N^r_k},a_{N^r_{-k}})-\max_{a_{N^r_k}}Q_k^\star(\widetilde{s},a_{N^r_k},a_{N^r_{-k}})\right|\\
		&\quad+\left|\max_aQ_k^\star(\widetilde{s},a)-\max_{a_{N^r_k}}Q_k^\star(\widetilde{s},a_{N^r_k},a_{N^r_{-k}})\right|+\left|\max_aQ_k^\star(s,a)-\max_{a_{N^r_k}}Q_k^\star(s,a_{N^r_k},a_{N^r_{-k}})\right|\\
		&\leq \left|\max_{a_{N^r_k}}Q_k^\star(s,a_{N^r_k},a_{N^r_{-k}})-\max_{a_{N^r_k}}Q_k^\star(\widetilde{s},a_{N^r_k},a_{N^r_{-k}})\right| +2c\psi^{r+1}.
	\end{align*}
	Let $a'_{N^r_k}\in\argmax_{a_{N^r_k}}Q_k^\star(s,a_{N^r_k},a_{N^r_{-k}})$, then
	\begin{align*}
		&Q_k^\star(s,a'_{N^r_k},a_{N^r_{-k}})-\max_{a_{N^r_k}}Q_k^\star(\widetilde{s},a_{N^r_k},a_{N^r_{-k}})\\
		&\leq Q_k^\star(\widetilde{s},a'_{N^r_k},a_{N^r_{-k}})-\max_{a_{N^r_k}}Q_k^\star(\widetilde{s},a_{N^r_k},a_{N^r_{-k}})+c\psi^{r+1}
			\end{align*}
\begin{align*}
		&\leq\max_{a_{N^r_k}}Q_k^\star(\widetilde{s},a_{N^r_k},a_{N^r_{-k}})-\max_{a_{N^r_k}}Q_k^\star(\widetilde{s},a_{N^r_k},a_{N^r_{-k}})+c\psi^{r+1}=c\psi^{r+1}.
	\end{align*}
	The same holds for $\max_{a_{N^r_k}}Q_k^\star(\widetilde{s},a_{N^r_k},a_{N^r_{-k}})-\max_{a_{N^r_k}}Q_k^\star(s,a_{N^r_k},a_{N^r_{-k}})$. The lemma follows immediately.
\end{proof}

We make an assumption on the minimum influence that the action of an agent has on its expected future rewards. Assumption \ref{ass:mininfl} and Proposition \ref{prop:optdecay} hold for any $k\in\K$.
\begin{assumption}
	\label{ass:mininfl}
	Let $\A^\star = \argmax_{a\in\A}Q_k^\star(s,a)$, $\forall s\in\S$. Assume that, if $\widetilde{a}$ is such that $\widetilde{a}_k\notin\A^\star_k$, then
	\[\left|Q_k^\star(s,a)-Q_k^\star(s,\widetilde{a})\right|\geq R\]
\end{assumption}

\begin{proposition}
	\label{prop:optdecay}
	Assume that the exponential decay property holds for the Q-function with parameters $(c,\psi)$ and that Assumption \ref{ass:mininfl} holds. Let $s,\widetilde{s}\in\S$ be such that $s_{N^r_k} = \widetilde{s}_{N^r_k}$. Let $\A^\star = \argmax_{a\in\A}Q_k^\star(s,a)$ and $\widetilde{a}\in\argmax_{a\in\A}Q_k^\star(\widetilde{s},a)$. If $r>\log_\psi R/4c$, then $\widetilde{a}_k\in\A^\star_k$.
\end{proposition}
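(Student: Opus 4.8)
The plan is to argue by contraposition against Assumption \ref{ass:mininfl}. I will show that the optimal action $\widetilde{a}$ at $\widetilde{s}$, when instead evaluated at the nearby state $s$, has suboptimality gap $V_k^\star(s)-Q_k^\star(s,\widetilde{a})$ strictly smaller than $R$ as soon as $r$ exceeds the stated threshold. Since Assumption \ref{ass:mininfl} forces this gap to be at least $R$ whenever $\widetilde{a}_k\notin\A^\star_k$, the only remaining possibility is $\widetilde{a}_k\in\A^\star_k$, which is the claim.

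First I would use that $\widetilde{a}\in\argmax_a Q_k^\star(\widetilde{s},a)$, so $Q_k^\star(\widetilde{s},\widetilde{a})=V_k^\star(\widetilde{s})$, and split the gap as
\[
V_k^\star(s)-Q_k^\star(s,\widetilde{a})=\big(V_k^\star(s)-V_k^\star(\widetilde{s})\big)+\big(Q_k^\star(\widetilde{s},\widetilde{a})-Q_k^\star(s,\widetilde{a})\big),
\]
which is nonnegative because $V_k^\star(s)=\max_a Q_k^\star(s,a)\geq Q_k^\star(s,\widetilde{a})$. Both terms can then be controlled by a decay property, since by hypothesis $s_{N^r_k}=\widetilde{s}_{N^r_k}$. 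For the first term, Lemma \ref{lemma:4} upgrades the $(c,\psi)$-decay of the Q-function to a $(3c,\psi)$-decay of the optimal value function, giving $|V_k^\star(s)-V_k^\star(\widetilde{s})|\leq 3c\psi^{r+1}$. For the second term, the action $\widetilde{a}$ is \emph{identical} in both arguments (so in particular $\widetilde{a}_{N^r_k}$ trivially agrees), and only the states differ while coinciding on $N^r_k$; this is exactly the configuration in the $(c,\psi)$-exponential decay definition, so $|Q_k^\star(\widetilde{s},\widetilde{a})-Q_k^\star(s,\widetilde{a})|\leq c\psi^{r+1}$. Adding the two bounds yields $V_k^\star(s)-Q_k^\star(s,\widetilde{a})\leq 4c\psi^{r+1}$.

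Finally I would convert the threshold on $r$ into a bound on the gap. Since $\psi=e^{-\beta}<1$, the map $x\mapsto\log_\psi x$ is decreasing, so $r>\log_\psi(R/4c)$ is equivalent to $\psi^{r+1}<\psi^{r}<R/(4c)$, hence $4c\psi^{r+1}<R$. Combined with the previous paragraph this gives $V_k^\star(s)-Q_k^\star(s,\widetilde{a})<R$. Were $\widetilde{a}_k\notin\A^\star_k$, Assumption \ref{ass:mininfl} (taking $a$ optimal at $s$, so $Q_k^\star(s,a)=V_k^\star(s)$) would force this same gap to be $\geq R$, a contradiction; therefore $\widetilde{a}_k\in\A^\star_k$.

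I expect the only delicate points to be bookkeeping and sign care rather than anything deep: specifically, recognizing that the second bracket compares the \emph{same} action $\widetilde{a}$ across two states agreeing on $N^r_k$ (so the plain $(c,\psi)$-decay applies, with no need to also match the action neighborhood separately), correctly invoking Lemma \ref{lemma:4} for the $3c$ factor on the value-function term, and handling the inequality flip when exponentiating and taking logarithms in base $\psi<1$.
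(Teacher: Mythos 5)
Your proof is correct and follows essentially the same route as the paper's: a contradiction argument combining Lemma \ref{lemma:4}'s bound $|V_k^\star(s)-V_k^\star(\widetilde{s})|\leq 3c\psi^{r+1}$ with the plain $(c,\psi)$-decay applied to $Q_k^\star$ at the fixed action $\widetilde{a}$, summing to $4c\psi^{r+1}<R$ under the stated threshold and contradicting Assumption \ref{ass:mininfl}. The only difference is cosmetic: you split the nonnegative gap $V_k^\star(s)-Q_k^\star(s,\widetilde{a})$ additively, whereas the paper reaches the same $R\leq 4c\psi^{r+1}$ contradiction via the reverse triangle inequality on $|Q_k^\star(s,a)-Q_k^\star(\widetilde{s},\widetilde{a})|$.
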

\begin{proof}
	We prove this by contradiction. Lemma \ref{lemma:4} shows that, $\forall r>0$, if $s,\widetilde{s}\in\S$ are such that $s_{N^r_k} = \widetilde{s}_{N^r_k}$,
	\[\left|V_k^\star(s)-V_k^\star(\widetilde{s})\right|=\left|\max_aQ_k^\star(s,a)-\max_aQ_k^\star(\widetilde{s},a)\right|\leq 3c\psi^{r+1}.\]
	Let $a\in\argmax_{a\in\A}Q_k^\star(s,a)$ and $\widetilde{a}\in\argmax_{a\in\A}Q_k^\star(\widetilde{s},a)$. Let $\A^\star = \argmax_{a\in\A}Q_k^\star(s,a)$ and assume that $\widetilde{a}_k\notin\A^\star_k$. Then
	\begin{align*}
		\left|Q_k^\star(s,a)-Q_k^\star(\widetilde{s},\widetilde{a})\right|		&=\left|Q_k^\star(s,a)-Q_k^\star(s,\widetilde{a})+Q_k^\star(s,\widetilde{a})-Q_k^\star(\widetilde{s},\widetilde{a})\right|\\
		&\geq\left|\left|Q_k^\star(s,a)-Q_k^\star(s,\widetilde{a})\right|-\left|Q_k^\star(s,\widetilde{a})-Q_k^\star(\widetilde{s},\widetilde{a})\right|\right|\\
		&\geq\left|Q_k^\star(s,a)-Q_k^\star(s,\widetilde{a})\right|-c\psi^{r+1}\geq R-c\psi^{r+1}
	\end{align*}
	where we used Assumption \ref{ass:mininfl} in the last passage. Then, due to Lemma \ref{lemma:4}, if $r>\log_\psi R/4c$ we have a contradiction. 
\end{proof}

Proposition \ref{prop:optdecay} shows that the optimal policy of an agent is not influenced by distant agents. Assumption \ref{ass:pol} ensures that the policy class we consider respects this condition.

\section{Independent Agents}
\label{app:indip}
By completely independent agents we mean agents whose transition dynamics are independent and whose policy is defined, for $s\in\S,a\in\A$, as:
\begin{align*}
	\pi_\theta(a|s)&= \prod_{k \in \K}\pi_{\theta_k}(a_k| s_k)=\prod_{k}\frac{e^{f_{\theta_k}(s_k, a_k)}}{\sum_{a' \in \A_k}e^{f_{\theta_k}(s_k,a')}}.
\end{align*}
In accordance to previous assumptions, we also assume that $\pi_{\theta_k}(a_k| s_k)$ is $\delta$-smooth.

\begin{proof}[of Remark \ref{rem:indip}]
	Firstly we show that the two applications of the algorithm coincide. Let
	\[L(w)= \E_{s \sim d_{\nu}^{\pi},a \sim \pi_{\theta}(\cdot|s)}\left[\left(A^{\pi_{\theta}}(s,a)-w\cdot\nabla_{\theta}\log\pi_\theta(a|s)\right)^2\right].\]
	In \cite{RN28}, the NPG update is
	\[
	w_\star \in \argmin_wL(w).
	\]
	We now show that the gradient of the loss function is the same as the one in the single agent setting. This is enough to show that the two algorithms coincide, since every other operation coincides. The only exception is the projection step, problem that can be side-stepped by projecting each single component of the gradient vector instead of the whole vector. For each agent $k$ we have that
	\begin{align*}
		\nabla_{\theta_k} L(w)&=\E_{s \sim d_{\nu}^{\pi},a \sim \pi_{\theta}(\cdot|s)}\left[\left(A^{\pi_{\theta}}(s,a)-w\cdot\nabla_{\theta}\log\pi_\theta(a|s)\right)\nabla_{\theta_k}\log\pi_\theta(a|s)\right]\\
		&=\sum_{j \in \K}\E_{s \sim d_{\nu}^{\pi},a \sim \pi_{\theta}(\cdot|s)}\left[\left(\frac{1}{K}A_j^{\pi_{\theta}}(s_j,a_j)-w_j\cdot\nabla_{\theta_j}\log\pi_\theta(a_j|s_j)\right)\nabla_{\theta_k}\log\pi_\theta(a|s)\right]\\
		&=E_{s_k \sim d_{\nu,k}^\pi,a_k \sim \pi^k_{\theta_k}(\cdot|s_k)} \left[\left(\frac{1}{K}A_k^{\pi_{\theta}}(s_k,a_k)-w_k\cdot\nabla_{\theta_k}\log\pi_\theta(a_k|s_k)\right)\nabla_{\theta_k}\log\pi_\theta(a_k|s_k)\right],
	\end{align*}
	which corresponds to the gradient for the single agent setting.
	
	With regards to guarantees, since the problem is decoupled, for any agent $k$ we have the same result as in Agarwal et al. 2020:
		\[\E\left[\min_{t<T}\{V_k^{\pi^\star}(\rho)-V_k^{(t)}(\rho)\}\right]\leq \frac{W}{1-\gamma}\sqrt{\frac{2\delta\log|\A_k|}{T}}+\sqrt{\frac{\kappa \varepsilon_{\text{stat},k}}{(1-\gamma)^3}}+\frac{\sqrt{\varepsilon_{\text{bias},k}}}{1-\gamma},\]
	where we assumed that
		\[\E\left[L_k(w^{(t)}_k,\theta^{(t)}_k,d^{(t)}_k)- L(w^{(t)}_{\star,k},\theta^{(t)}_k,d^{(t)}_k)|\theta^{(t)}_k\right] \leq\varepsilon_{\text{stat},k},\]
		\[\E\left[L(w^{(t)}_{\star,k},\theta^{(t)}_k,d^\star_k)\right] \leq\varepsilon_{\text{bias},k},\]
	where $d^{(t)}_k = d_{\nu}^{\pi_k^{(t)}} \pi^{{(t)}}_{k}$, $d^\star_k = d_{\nu}^{\pi_k^\star} \pi^{\star}_{k}$ and
	\[L_k(w,\theta_k,d)=\E_{s_k,a_k\sim d}\left[\left(A_k^{\pi_{\theta_k}}(s_k,a_k)-w\cdot\nabla_{\theta_k}\log\pi_{\theta_k}(a_k|s_k)\right)^2\right],\]
	\[w^{(t)}_{\star,k} \in \argmin_{\lVert w\rVert_2\leq W}L_k(w,\theta^{(t)}_k,d^{(t)}_k).\]

	The same result holds for the whole network:
	\begin{align*}
		\E\left[\min_{t<T}\{V^{\pi^\star}(\rho)-V^{(t)}(\rho)\}\right]&=\E\left[\min_{t<T}\left\{\frac{1}{K}\sum_{k \in \K}\left(V_k^{\pi^\star}(\rho)-V_k^{(t)}(\rho)\right)\right\}\right]\\
		&\leq \E\left[\min_{t<T}\max_{k\in\K}\left\{V_k^{\pi^\star}(\rho)-V_k^{(t)}(\rho)\right\}\right].
	\end{align*}
\end{proof}

\section{Proof of Theorem \ref{thm:main}}
\label{app:main}
We follow the proof in \cite{RN28} modifying it where necessary. We start by proving a modified NPG Regret Lemma.

\begin{lemma}
	Consider the setting of Theorem \ref{thm:main}, then we have that:
	\[
	\E\left[\min_{t<T}\{V^{\pi^\star}(\rho)-V^{(t)}(\rho)\}\right]\leq \frac{W}{1-\gamma}\sqrt{\frac{2\delta\max_{k\in\K}\log|\A_k|}{T}}+\E\left[\frac{1}{T(1-\gamma)}\sum_{t=0}^{T-1}\text{err}_t\right],
	\]
		where
	\[\text{err}_t=\E_{s \sim d^\star, a\sim\pi^\star(\cdot|s)}\left[A^{(t)}(s,a)-\frac{1}{K}\nabla_{\theta}\log\pi^{(t)}(a|s)\cdot w^{(t)}\right].\]
	
\end{lemma}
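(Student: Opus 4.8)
The plan is to transcribe the mirror-descent argument behind Theorem~6.2 of \cite{RN28}, paying attention to the factorized policy $\pi_\theta=\prod_{k\in\K}\pi_{\theta_k}$ and the $\tfrac1K$-averaged objective so that the agent count cancels out of the leading term. First I would apply the performance difference lemma to each iterate to obtain
\[
V^{\pi^\star}(\rho)-V^{(t)}(\rho)=\frac{1}{1-\gamma}\,\E_{(s,a)\sim d^\star}\big[A^{(t)}(s,a)\big],
\]
and then add and subtract the linearization $\tfrac1K\nabla_\theta\log\pi^{(t)}(a|s)\cdot w^{(t)}$ inside the expectation. This splits the sub-optimality as $\tfrac{1}{1-\gamma}\text{err}_t+\tfrac{1}{(1-\gamma)K}\E_{d^\star}[\nabla_\theta\log\pi^{(t)}\cdot w^{(t)}]$, with $\text{err}_t$ carried through verbatim (the localization and bias are folded into it only later, when $\text{err}_t$ is related to the localized loss $\widetilde L^r_k$). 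The entire content of the lemma is therefore to control the time-average of the second, ``gradient'' term by an $O(\sqrt{1/T})$ regret.

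To do so I would run the exponentiated-gradient / KL potential argument. Using that $\log\pi_\theta(a|s)$ is $\delta$-smooth and that the update is $\theta^{(t+1)}-\theta^{(t)}=\tfrac{\eta}{1-\gamma}w^{(t)}$, a one-sided second-order expansion gives, for every $(s,a)$,
\[
\log\frac{\pi^{(t+1)}(a|s)}{\pi^{(t)}(a|s)}\;\geq\;\frac{\eta}{1-\gamma}\,\nabla_\theta\log\pi^{(t)}(a|s)\cdot w^{(t)}-\frac{\delta\eta^2}{2(1-\gamma)^2}\norm{w^{(t)}}_2^2 .
\]
Taking $\E_{a\sim\pi^\star(\cdot|s)}$ converts the left-hand side into the one-step KL decrease $\KL(\pi^\star_s\|\pi^{(t)}_s)-\KL(\pi^\star_s\|\pi^{(t+1)}_s)$; I then take $\E_{s\sim d^\star}$, sum over $t=0,\dots,T-1$, telescope, drop the nonnegative final KL, and use the uniform initialization to bound $\E_{s\sim d^\star}[\KL(\pi^\star_s\|\pi^{(0)}_s)]\leq\log|\A|$. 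Since the policy factorizes and the gradient splits blockwise, $\norm{w^{(t)}}_2^2=\sum_{k\in\K}\norm{w^{(t)}_k}_2^2\leq KW^2$, and rearranging yields
\[
\frac{1}{T(1-\gamma)K}\sum_{t=0}^{T-1}\E_{d^\star}\big[\nabla_\theta\log\pi^{(t)}\cdot w^{(t)}\big]\;\leq\;\frac{\log|\A|}{TK\eta}+\frac{\delta\eta W^2}{2(1-\gamma)^2}.
\]

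The decisive step, which I expect to be the crux of the dimension-free claim, is the cancellation $\tfrac{\log|\A|}{K}=\tfrac1K\sum_{k\in\K}\log|\A_k|\leq\max_{k\in\K}\log|\A_k|$: the factor $K$ in the denominator, inherited from the $\tfrac1K$-averaged reward, exactly offsets the linear-in-$K$ growth of $\log|\A|=\sum_{k}\log|\A_k|$, so no net power of $K$ survives. Substituting $\eta=\sqrt{2\log|\A|/(\delta KW^2T)}$ then balances the two terms above and produces the first summand $\tfrac{W}{1-\gamma}\sqrt{2\delta\max_{k}\log|\A_k|/T}$ (up to the stated constants). Finally, using $\min_{t<T}\{V^{\pi^\star}(\rho)-V^{(t)}(\rho)\}\leq\tfrac1T\sum_{t<T}\{V^{\pi^\star}(\rho)-V^{(t)}(\rho)\}$, summing the per-$t$ decomposition, and taking expectations over the randomness in $(w^{(t)})_{t}$ assembles the claimed inequality. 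The main obstacle is purely the bookkeeping of the interacting $K$-factors—the $\tfrac1K$ in $A^{(t)}$ and in $\text{err}_t$, the bound $\norm{w^{(t)}}_2^2\leq KW^2$, and $\log|\A|=\sum_k\log|\A_k|$—so that they collapse to $\max_k\log|\A_k|$ rather than leaving a stray $K$; the smoothness-plus-KL machinery itself is a routine adaptation of \cite{RN28}.
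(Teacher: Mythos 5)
Your proposal is correct and follows essentially the same route as the paper's proof: the $\delta$-smoothness lower bound on $\log(\pi^{(t+1)}/\pi^{(t)})$, the telescoping KL potential under $d^\star$ with $\E_{s\sim d^\star}[\KL(\pi^\star_s\|\pi^{(0)}_s)]\le\log|\A|$, the blockwise bound $\norm{w^{(t)}}_2^2\le KW^2$, and the cancellation $\log|\A|/K=\tfrac1K\sum_k\log|\A_k|\le\max_k\log|\A_k|$, with your ordering (performance difference lemma first, then a regret bound on the gradient term) being an equivalent rearrangement of the paper's chain of inequalities. The only cosmetic deviation is the extra $(1-\gamma)$ factor you carry from the update when expanding the smoothness bound, which rescales the optimal $\eta$ by a constant---a bookkeeping wrinkle the paper itself shares, since its proof expands as if $\theta^{(t+1)}-\theta^{(t)}=\eta w^{(t)}$ while Algorithm~\ref{alg:npg} states $\frac{\eta}{1-\gamma}w^{(t)}$.
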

\begin{proof}
	We assume $\log\pi_{\theta}(a|s)$ is a $\delta$-smooth function of $\theta$. By smoothness we have:
	\begin{align*}
		\log\frac{\pi^{(t+1)}(a|s)}{\pi^{(t)}(a|s)}&\geq \nabla_{\theta}\log\pi^{(t)}(a|s)\cdot(\theta^{(t+1)}-\theta^{(t)})-\frac{\delta}{2}\norm{\theta^{(t+1)}-\theta^{(t)}}^2_2\\
		&=\eta\nabla_{\theta}\log\pi^{(t)}(a|s)\cdot w^{(t)} -\eta^2\frac{\delta}{2} \norm{w^{(t)}}^2_2.
	\end{align*}
	Then
	\begin{align*}
		&\frac{1}{K}\E_{s \sim d_\rho^{\pi^\star}}(\text{KL}(\pi^\star_s||\pi^{(t)}_s)- \text{KL}(\pi^\star_s||\pi^{(t+1)}_s))= \frac{1}{K}\E_{s \sim d^\star, a\sim\pi^\star(\cdot|s)}\left[\log\frac{\pi^{(t+1)}(a|s)}{\pi^{(t)}(a|s)}\right]\\
		&\geq \frac{\eta}{K}\E_{s \sim d^\star, a\sim\pi^\star(\cdot|s)}\left[\nabla_{\theta}\log\pi^{(t)}(a|s)\cdot w^{(t)} \right]-\eta^2\frac{\delta}{2K} \norm{w^{(t)}}^2_2
	\end{align*}
\begin{align*}
		&=\eta\E_{s \sim d^\star, a\sim\pi^\star(\cdot|s)}\left[A^{(t)}(s,a) \right]+\eta\E_{s \sim d^\star, a\sim\pi^\star(\cdot|s)}\left[\frac{1}{K}\nabla_{\theta}\log\pi^{(t)}(a|s)\cdot w^{(t)} -A^{(t)}(s,a)\right]\\&\quad-\eta^2\frac{\delta}{2K} \norm{w^{(t)}}^2_2\\
		&= (1-\gamma)\eta\left(V^{\pi^\star}(\rho)-V^{(t)}(\rho)\right)-\eta^2\frac{\delta}{2K}\norm{w^{(t)}}^2_2-\eta\text{ err}_t,
	\end{align*}
	where
	
	\[\text{err}_t=\E_{s \sim d^\star, a\sim\pi^\star(\cdot|s)}\left[A^{(t)}(s,a)-\frac{1}{K}\nabla_{\theta}\log\pi^{(t)}(a|s)\cdot w^{(t)}\right].\]
	
	Rearranging
	\begin{align*}
		V^{\pi^\star}(\rho)-V^{(t)}(\rho)\leq \frac{1}{1-\gamma}\left(\frac{1}{\eta K}\E_{s \sim d_\rho^{\pi^\star}}(\text{KL}(\pi^\star_s||\pi^{(t)}_s)- \text{KL}(\pi^\star_s||\pi^{(t+1)}_s))+\frac{\eta\delta}{2} W^2+\text{err}_t\right)
	\end{align*}
	and summing over t
	\begin{align*}
		V^{\pi^\star}(\rho)-\frac{1}{T}\sum_{t=0}^{T-1}V^{(t)}(\rho)\leq &\frac{1}{\eta KT(1-\gamma)}\sum_{t=0}^{T-1}\E_{s \sim d_\rho^{\pi^\star}}(\text{KL}(\pi^\star_s||\pi^{(t)}_s)- \text{KL}(\pi^\star_s||\pi^{(t+1)}_s))\\&+\frac{1}{T(1-\gamma)}\sum_{t=0}^{T-1}\left(\frac{\eta\delta}{2} W^2+\text{err}_t\right)\\
		\leq& \frac{\E_{s \sim d_\rho^{\pi^\star}}\text{KL}(\pi^\star_s||\pi^{(0)}_s)}{\eta KT(1-\gamma)}+\frac{\eta\delta W^2}{2(1-\gamma)}+\frac{1}{T(1-\gamma)}\sum_{t=0}^{T-1}\text{err}_t\\
		\leq& \frac{\log|\A|}{\eta KT(1-\gamma)}+\frac{\eta\delta W^2}{2(1-\gamma)}+\frac{1}{T(1-\gamma)}\sum_{t=0}^{T-1}\text{err}_t.
	\end{align*}
	Optimizing for $\eta$, we obtain the lemma.
\end{proof}

We can now prove Theorem \ref{thm:main}\\\\
\begin{proof}[of Theorem \ref{thm:main}]
	After using the NPG Regret Lemma we want to bound the $\text{err}_t$ term. We do so by dividing it in 3 parts. Let $(w^{(t)})_{t=1,\dots,T}$ be an update sequence such that, for each $t,k$, $(w_k^{(t)})_{N^r_{-k}}=0$, then 
	\begin{align*}
		\text{err}_t&=\E_{s \sim d^\star, a\sim\pi^\star(\cdot|s)}\left[A^{(t)}(s,a)-\frac{1}{K}\nabla_{\theta}\log\pi^{(t)}(a|s)\cdot w^{(t)}\right]\\
		&=\frac{1}{K}\sum_{k \in \K}\E_{s \sim d^\star, a\sim\pi^\star(\cdot|s)}\left[A^{(t)}_k(s,a)-\nabla_{\theta_k}\log\pi^{(t)}(a|s)\cdot w^{(t)}_k\right]\\
		&=\frac{1}{K}\sum_{k \in \K}\E_{s \sim d^\star, a\sim\pi^\star(\cdot|s)}\left[A^{(t)}_k(s,a)-\nabla_{(\theta_k)_{N^r_k}}\log\pi^{(t)}(a|s)\cdot (w_k^{(t)})_{N^r_{k}}\right]\\
		&=\frac{1}{K}\sum_{k \in \K}\E_{s \sim d^\star, a\sim\pi^\star(\cdot|s)}\left[\widetilde{A}^{(t)}_k(s_{N^{r}_k},a_{N^r_k})-\nabla_{(\theta_k)_{N^r_k}}\log\pi^{(t)}(a|s)\cdot (w_{k,\star}^{(t)})_{N^r_{k}}\right]\\
		&\quad+\frac{1}{K}\sum_{k \in \K}\E_{s \sim d^\star, a\sim\pi^\star(\cdot|s)}\left[A^{(t)}_k(s,a)-\widetilde{A}^{(t)}_k(s_{N^{r}_k},a_{N^r_k})\right]\\
		&\quad+\frac{1}{K}\sum_{k \in \K}\E_{s \sim d^\star, a\sim\pi^\star(\cdot|s)}\left[\nabla_{(\theta_k)_{N^r_k}}\log\pi^{(t)}(a|s)\cdot ((w^{(t)}_{k,\star})_{N^r_k}-(w_k^{(t)})_{N^r_{k}})\right],
	\end{align*}
	where $\forall k\in\K$
\[	w^{(t)}_{k,\star}\in \argmin_{\substack{\norm{w}_2\leq W\\ w(N^r_{-k})=0}}\E_{s \sim d^{(t)}, a\sim\pi^{(t)}(\cdot|s)}\left[\widetilde{A}^{(t)}_k(s_{N^{r}_k},a_{N^r_k})-\nabla_{(\theta_k)_{N^r_k}}\log\pi^{(t)}(a_k|s_{N^r_k})\cdot w\right]^2.\]
We now analyse each term separately.
	
	\paragraph{Term 1}
	\begin{align*}
		&\frac{1}{K}\sum_{k \in \K}\E_{s \sim d^\star, a\sim\pi^\star(\cdot|s)}\left[\widetilde{A}^{(t)}_k(s_{N^{r}_k},a_{N^r_k})-\nabla_{(\theta_k)_{N^r_k}}\log\pi^{(t)}(a|s)\cdot (w_{k,\star}^{(t)})_{N^r_{k}}\right]\\
		&\leq \frac{1}{K}\sum_{k \in \K}\sqrt{\E_{s \sim d^\star, a\sim\pi^\star(\cdot|s)}\left[\widetilde{A}^{(t)}_k(s_{N^{r}_k},a_{N^r_k})-\nabla_{(\theta_k)_{N^r_k}}\log\pi^{(t)}(a|s)\cdot (w_{k,\star}^{(t)})_{N^r_{k}}\right]^2}\leq \sqrt{\varepsilon_\text{bias}}
	\end{align*}

	\paragraph{Term 2}
	Firstly, we have that  $\forall k\in\K$
	\begin{align*}
		\widetilde{Q}^\pi_k(s_{N^{r}_k}, a_{N^r_k}) &=\E\left[\sum_{t=0}^\infty \gamma^t r_k(s_k(t), a_k(t))\big|\pi_{N^r_k}, s_{N^{r}_k}(0)=s_{N^{r}_k}, a_{N^r_k}(0)=a_{N^r_k}\right]
	\end{align*}
\begin{align*}
		&=\E\left[\E\left[\sum_{t=0}^\infty \gamma^t r_k(s_k(t), a_k(t))\big|\pi_{N^r_k}, s_{N^{r}_k}(0)=s_{N^{r}_k}, a_{N^r_k}(0)=a_{N^r_k},\right.\right.\\&\qquad\qquad\qquad\qquad\qquad\qquad\qquad\left.\left.s_{N^{r}_{-k}}(0), a_{N^{r}_{-k}}(0)\right]\right]\\
		&=\E\left[Q_k^\pi\left(s_{N^{r}_k}, a_{N^r_k}, s_{N^{r}_{-k}}(0), a_{N^{r}_{-k}}(0)\right)\right],
	\end{align*}
	for some distribution of $s(0)_{N^{r}_{-k}}$ and $ a(0)_{N^{r}_{-k}}$. Similarly  $\forall k\in\K$
	\[V^\pi_k(s)-\widetilde{V}^\pi_k(s_{N^r_k})= \E\left[V^\pi_k(s)-V_k^\pi\left(s_{N^{r}_k}, s(0)_{N^{r}_{-k}}\right)\right].\]
	So
	\[A^\pi_k(s,a)-\widetilde{A}^\pi_k(s_{N^{r}_k},a_{N^r_k})= 
	Q^\pi_k(s,a)-\widetilde{Q}^\pi_k(s_{N^{r}_k},a_{N^r_k})-
	V^\pi_k(s)+\widetilde{V}^\pi_k(s_{N^{r}_k}),\]
	\begin{align*}
		V^\pi_k(s)-\widetilde{V}^\pi_k(s_{N^r_k})&= \E\left[V^\pi_k(s)-V_k^\pi\left(s_{N^{r}_k}, s(0)_{N^{r}_{-k}}\right)\right]\\
		&\leq \E\left[c'\xi^{r+1}\right]=c'\xi^{r+1}
	\end{align*}
	and
	\begin{align*}
		Q^\pi_k(s,a)-\widetilde{Q}^\pi_k(s_{N^r_k},a_{N^r_k})&= \E\left[Q^\pi_k(s,a)-Q_k^\pi\left(s_{N^{r}_k}, a_{N^r_k}, s(0)_{N^{r}_{-k}}, a(0)_{N^{r}_{-k}}\right)\right]\\
		&\leq \E\left[c\rho^{r+1}\right]=c\rho^{r+1}.
	\end{align*}
	Then  $\forall k\in\K$
	\[\left|A^\pi_k(s,a)-\widetilde{A}^\pi_k(s_{N^r_k},a_{N^r_k})\right|\leq c\rho^{r+1}+c'\xi^{r+1}.\]
	\paragraph{Term 3}
	In this paragraph, we denote, $\forall k\in\K$, $ w^{(t)}_{k,\star}= (w^{(t)}_{k,\star})_{N^r_k}$ and $w^{(t)}_k = (w_k^{(t)})_{N^r_{k}}$ for clarity of exposition. Remember that \[\Sigma_{\nu,k}^\theta=\E_{s,a \sim \nu}\left[\nabla_{(\theta_k)_{N_{k}^r}}\log\pi_\theta(a_k|s_{N_k^r})(\nabla_{(\theta_k)_{N_{k}^r}}\log\pi_\theta(a_k|s_{N_k^r}))^\top\right]\] and that we assume, $\forall k$, \[\sup_{w\in\R^d}\frac{w^\top\Sigma^{(t)}_{d^\star,k}w}{w^\top\Sigma^{(t)}_{\nu,k}w} \leq \kappa'.\]
	Then  $\forall k\in\K$
	\begin{align*}
		&\E_{s \sim d^\star, a\sim\pi^\star(\cdot|s)}\left[\nabla_{(\theta_k)_{N^r_k}}\log\pi^{(t)}(a|s)\cdot (w^{(t)}_{k,\star}-w^{(t)}_k)\right]\\&\leq\sqrt{(w^{(t)}_{k,\star}-w^{(t)}_k)^\top\Sigma^{(t)}_{d^\star,k} (w^{(t)}_{k,\star}-w^{(t)}_k)}\\
		&=\sqrt{\frac{(w^{(t)}_{k,\star}-w^{(t)}_k)^\top\Sigma^{(t)}_{d^\star,k} (w^{(t)}_{k,\star}-w^{(t)}_k)}{(w^{(t)}_{k,\star}-w^{(t)}_k)^\top\Sigma^{(t)}_{\nu,k} (w^{(t)}_{k,\star}-w^{(t)}_k)}(w^{(t)}_{k,\star}-w^{(t)}_k)^\top\Sigma^{(t)}_{\nu,k} (w^{(t)}_{k,\star}-w^{(t)}_k)}\\
		&\leq\sqrt{\kappa'(w^{(t)}_{k,\star}-w^{(t)}_k)^\top\Sigma^{(t)}_{\nu,k} (w^{(t)}_{k,\star}-w^{(t)}_k)}\\
		&[\text{using that $(1-\gamma)\nu\leq d^{\pi^{(t)}}_\nu$}]\\
		&\leq \sqrt{\frac{\kappa'}{1-\gamma}(w^{(t)}_{k,\star}-w^{(t)}_k)^\top\Sigma_{d^{(t)}}^{(t)} (w^{(t)}_{k,\star}-w^{(t)}_k)}.
	\end{align*}
	Since $w^{(t)}_{k,\star}$ optimizes $\widetilde{L}_k(w, \theta^{(t)}, d^{(t)})$ the first order optimality condition implies that, $\forall w$,  $\forall k\in\K$,
	\[(w-w^{(t)}_{k,\star})\cdot \nabla \widetilde{L}_k(w^{(t)}_{k,\star}, \theta^{(t)}, d^{(t)})\geq 0.\]	
	So $\forall w$,  $\forall k\in\K$,
	\begin{align*}
		&\widetilde{L}_k(w, \theta^{(t)}, d^{(t)})-\widetilde{L}_k(w^{(t)}_{k,\star}, \theta^{(t)}, d^{(t)})\\
		&= \E_{s \sim d^{(t)}, a\sim\pi^{(t)}(\cdot|s)}\left[\widetilde{A}_k^{(t)}(s_{N^{r}_k},a_{N^r_k})-\nabla_{(\theta_k)_{N^r_k}}\log\pi^{(t)}(a|s)\cdot w\right.\\&\left.\qquad\qquad\qquad\qquad\quad+\nabla_{(\theta_k)_{N^r_k}}\log\pi^{(t)}(a|s)\cdot w^{(t)}_{k,\star}-\nabla_{(\theta_k)_{N^r_k}}\log\pi^{(t)}(a|s)\cdot w^{(t)}_{k,\star}\right]^2\\&\qquad-\widetilde{L}_k(w^{(t)}_{k,\star}, \theta^{(t)}, d^{(t)})\\
		&=\E_{s \sim d^{(t)}, a\sim\pi^{(t)}(\cdot|s)}\left[\nabla_{(\theta_k)_{N^r_k}}\log\pi^{(t)}(a|s)\cdot w^{(t)}_{k,\star}-\nabla_{(\theta_k)_{N^r_k}}\log\pi^{(t)}(a|s)\cdot w\right]^2\\	
		&+2(w-w^{(t)}_{k,\star})\E_{s \sim d^{(t)}, a\sim\pi^{(t)}(\cdot|s)}\left[\left(\widetilde{A}_k^{(t)}(s_{N^{r}_k},a_{N^r_k})-\nabla_{(\theta_k)_{N^r_k}}\log\pi^{(t)}(a|s)\cdot w^{(t)}_{k,\star}\right)\right.\\&\qquad\qquad\qquad\qquad\qquad\qquad\qquad\left.\cdot\nabla_{(\theta_k)_{N^r_k}}\log\pi^{(t)}(a|s)\right]\\
		&=(w^{(t)}_{k,\star}-w^{(t)}_k)^\top\Sigma_{d^{(t)}}^{(t)} (w^{(t)}_{k,\star}-w^{(t)}_k)+(w-w^{(t)}_{k,\star})\cdot \nabla \widetilde{L}_k(w^{(t)}_{k,\star}, \theta^{(t)}, d^{(t)})
	\end{align*}
\begin{align*}
		&\geq (w^{(t)}_{k,\star}-w^{(t)}_k)^\top\Sigma_{d^{(t)}}^{(t)} (w^{(t)}_{k,\star}-w^{(t)}_k).
	\end{align*}
	Therefore
	\begin{align*}
		&\E\left[\E_{s \sim d^\star, a\sim\pi^\star(\cdot|s)}\left[\nabla_{(\theta_k)_{N^r_k}}\log\pi^{(t)}(a|s)\cdot (w^{(t)}_{k,\star}-w^{(t)}_k)\right]\right]\\
		&\leq\sqrt{\E\left[\frac{\kappa'}{1-\gamma}(\widetilde{L}_k(w, \theta^{(t)}, d^{(t)})-\widetilde{L}_k(w^{(t)}_{k,\star}, \theta^{(t)}, d^{(t)}))\right]}\\
		&=\sqrt{\E\left[\frac{\kappa'}{1-\gamma}\E\left[(\widetilde{L}_k(w, \theta^{(t)}, d^{(t)})-\widetilde{L}_k(w^{(t)}_{k,\star}, \theta^{(t)}, d^{(t)}))\big| \theta^{(t)}\right]\right]}\\
		&\leq\sqrt{\frac{\kappa'\varepsilon_{\text{stat}}}{1-\gamma}},
	\end{align*}
	which completes the proof.
\end{proof}

\section{Statistical Error}
\label{app:stat}
Assume access to a global clock, then Algorithm \ref{alg:sampler} is a sampler of $d^{(t)}$ and an unbiased sampler of $\widetilde{A}_k^{(t)}(s_{N^{r}_k},a_{N^r_k})$, for every $k\in\K$.

\begin{proposition}
	Consider the setting of Theorem \ref{thm:main} and assume access to the sampler in Algorithm \ref{alg:sampler}. Assume that $\norm{\nabla_{(\theta_k)_{N_{k}^r}}\log\pi^{(t)}(a|s)}_2\leq B$. Then, solving the minimization problem in \eqref{eq:app_update} with stochastic projected gradient descent for $N$ steps and step size $\alpha = W/(8B(BW+\frac{1}{1-\gamma})\sqrt{N})$ gives
	\[\varepsilon_{\text{stat}}\leq \frac{8BW(BW+\frac{1}{1-\gamma})}{\sqrt{N}}.\]
\end{proposition}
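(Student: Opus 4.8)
The plan is to read off $\varepsilon_{\text{stat}}$ as the expected optimization error of projected stochastic gradient descent on a fixed convex problem, and then to invoke the standard convergence guarantee for that algorithm \citep{RN70}. Fix an iteration $t$ and an agent $k$, and abbreviate $b(s,a)=\widetilde A_k^{(t)}(s_{N^r_k},a_{N^r_k})$ and $g(s,a)=\nabla_{(\theta_k)_{N^r_k}}\log\pi^{(t)}(a_k\mid s_{N^r_k})$. The objective $\widetilde L_k(w,\theta^{(t)},d^{(t)})=\E_{s,a\sim d^{(t)}}[(b-g\cdot w)^2]$ from \eqref{eq:loss} is a convex quadratic in $w$, minimized over the Euclidean ball $\{\norm{w}_2\le W\}$, and, by the definition in Theorem \ref{thm:main}, $\varepsilon_{\text{stat}}$ is precisely $\max_{k}\E[\widetilde L_k(w^{(t)}_k)-\widetilde L_k(w^{(t)}_{k,\star})\mid\theta^{(t)}]$, i.e.\ the suboptimality of the SGD output $w^{(t)}_k$ relative to the exact constrained minimizer $w^{(t)}_{k,\star}$. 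Hence it suffices to verify the two hypotheses of the projected-SGD guarantee for the problem \eqref{eq:app_update}: unbiasedness and a uniform second-moment bound on the stochastic gradient.

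For the gradient, I would differentiate to get $\nabla_w\widetilde L_k=-2\,\E_{s,a\sim d^{(t)}}[(b-g\cdot w)\,g]$ and propose the one-sample estimator $\widehat G(w)=-2(\widehat A_k-g\cdot w)\,g$, where $(s,a)$ is drawn by the decentralized sampler of Algorithm \ref{alg:sampler}, $g=g(s,a)$ is computed from that draw, and $\widehat A_k$ is the sampler's estimate of $b$. Invoking the stated property of Algorithm \ref{alg:sampler}---that it draws $(s,a)\sim d^{(t)}$ and returns a conditionally unbiased estimate $\E[\widehat A_k\mid s,a]=\widetilde A_k^{(t)}(s_{N^r_k},a_{N^r_k})$---and using that $g$ is a deterministic function of the drawn $(s,a)$, the tower rule gives $\E[\widehat G(w)\mid w,\theta^{(t)}]=\nabla_w\widetilde L_k$, so the estimator is unbiased.

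Next I would bound its magnitude. On the feasible set $\norm{w}_2\le W$ we have $|g\cdot w|\le\norm{g}_2\norm{w}_2\le BW$ by the assumption $\norm{g}_2\le B$; since the per-step rewards lie in $[0,1]$, the return and hence the sampled advantage are controlled by $\tfrac{1}{1-\gamma}$, so $|\widehat A_k-g\cdot w|$ is of order $BW+\tfrac{1}{1-\gamma}$ and therefore $\norm{\widehat G(w)}_2$, and in particular $\E\norm{\widehat G(w)}_2^2$, is bounded by $G^2$ with $G=8B(BW+\tfrac{1}{1-\gamma})$ once the numerical constants from the sampler are tracked. With unbiasedness and this bound in hand, the textbook averaged-iterate inequality for projected SGD reads
\[
\E\big[\widetilde L_k(\bar w_N)-\widetilde L_k(w^{(t)}_{k,\star})\mid\theta^{(t)}\big]\le\frac{\norm{w_0-w^{(t)}_{k,\star}}_2^2}{2\alpha N}+\frac{\alpha}{2}\,G^2.
\]
Initializing at $w_0=0$ gives $\norm{w_0-w^{(t)}_{k,\star}}_2\le W$, and the choice $\alpha=\tfrac{W}{G\sqrt N}=\tfrac{W}{8B(BW+\frac{1}{1-\gamma})\sqrt N}$ balances the two terms to yield $\tfrac{WG}{\sqrt N}=\tfrac{8BW(BW+\frac{1}{1-\gamma})}{\sqrt N}$. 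Since the bound is uniform over $k$, taking the maximum over agents identifies it with $\varepsilon_{\text{stat}}$.

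The main obstacle is the probabilistic bookkeeping around the sampler rather than the SGD recursion, which is routine. One must justify that Algorithm \ref{alg:sampler} indeed draws the state-action pair from the global visitation distribution $d^{(t)}$ while simultaneously returning a conditionally unbiased and almost-surely bounded estimate of the localized advantage $\widetilde A_k^{(t)}$, using only local communication and a shared clock, and that these combine into a single unbiased gradient estimate with the claimed second-moment bound. Some care is also needed to match the conditioning on $\theta^{(t)}$ appearing in the definition of $\varepsilon_{\text{stat}}$ with the filtration generated by the SGD iterates, and to confirm that the averaged-iterate bound (obtained through convexity and Jensen's inequality) is the quantity that is fed back into Theorem \ref{thm:main}.
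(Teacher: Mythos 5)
Your proposal is correct and follows essentially the same route as the paper: the paper likewise casts $\varepsilon_{\text{stat}}$ as the averaged-iterate suboptimality of projected SGD on the convex problem \eqref{eq:app_update}, invokes the standard guarantee (unbiased stochastic gradients from Algorithm \ref{alg:sampler}, norm bounded by $\rho = 8B(BW+\tfrac{1}{1-\gamma})$ using $\widetilde{A}^{(t)}_k \leq 2/(1-\gamma)$), and sets $\alpha = W/(\rho\sqrt{N})$ to obtain $W\rho/\sqrt{N}$. Your explicit tower-rule argument for unbiasedness and the two-term averaged-iterate inequality are just the unpacked form of the black-box result the paper cites.
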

\begin{proof}
	The proposition follows from a result on stochastic projected gradient descent \citep{RN69}. Consider the minimization problem $\min_{x\in C}f(x)$ for a convex function $f$, then performing the update
	\[x_{t+1}=P_C(x_t-\alpha v_t),
	\]
	where $C=\{x:\norm{x}_2\leq W\}$ for some $W\geq 0$, $P_C(\cdot)$ is the projection on $C$, $v_t$ is such that $\E[v_t|x_t]=\nabla f(x_t)$ and $\norm{v_t}\leq\rho$, for $N$ steps and with $\alpha=\sqrt{\frac{W^2}{\rho^2 N}}$, gives
	\[\E\left[f\left(\frac{1}{N}\sum_{t=1}^{N}x_t\right)\right]-f(x^\star)\leq \frac{W\rho}{\sqrt{N}}.\]
	Noticing that $\widetilde{A}^{(t)}_k(s,a)\leq2/(1-\gamma)$ and that the sampled gradient is therefore bounded by $8B(BW+\frac{1}{1-\gamma})$ gives the proposition.
\end{proof}

The minimization problem in \eqref{eq:app_update} can therefore be solved by each agent $k$ through stochastic projected gradient descent, which only depends on the states and actions of $N^{r}_k$ and on the parameters $(\theta_k)_{N_{k}^r}$ and with a computational cost that does not depend on $K$.

\begin{algorithm}
	\caption{Sampler}
	\label{alg:sampler}
	\KwIn{Starting state-action distribution $\nu$}
	For each agent $k\in\mathcal{K}$ set $\widehat{Q}_k = 0$, $\widehat{V}_k=0$.\\
	$\forall k \in \mathcal{K}$, sample $s_k(0),a_k(0)\sim \nu_k$ and start at state $s_k(0)$\;
	{\it ($d^\pi_\nu$ sampling)} \textbf{At every time-step $h\geq0$}:\\
	\Indp with probability $\gamma$, $\forall k \in \K$ execute $a_k(h)$, transition to $s_k(h+1)$ and sample  $a_k(h+1)\sim\pi_k(\cdot|s_{N^r_k}(h+1))$\;
	else accept $(s(h),a(h))$ as the sample and exit the loop, each agent only saves $\left(s_{N^r_k}(h),a_{N^r_k}(h)\right)$.\\
	\Indm
	Set SampleQ=True with probability 1/2\;
	\If{SampleQ=True}{
		$\forall k \in \K$ execute $a_k(h)$ and then, for every time-step $h'>h$ with termination probability $\gamma$, transition to $s_k(h')$, sample $a_k(h'+1)\sim\pi_k(\cdot|s_{N^r_k}(h'+1))$ and set $\widehat{Q}_k = \widehat{Q}_k + r(s_k(h'), a_k(h'))$\;
	}
	\Else{
		$\forall k \in \K$ sample $a_k(h)\sim\pi_k(\cdot|s_{N^r_k}(h))$ and execute $a_k(h)$ and then, for every time-step $h'>h$ with termination probability $\gamma$, transition to $s_k(h')$, sample $a_k(h'+1)\sim\pi_k(\cdot|s_{N^r_k}(h'+1))$ and set $\widehat{V}_k = \widehat{V}_k + r(s_k(h'), a_k(h'))$\;
	}
	\KwResult{$(s(h),a(h))$ and $\left(\widehat{A}_k\left(s_{N^r_k}(h),a_{N^r_k}(h)\right)=2\left(\widehat{Q}_k-\widehat{V}_k\right)\right)_{k \in \K}$.}
\end{algorithm}

\section{Bias analysis}
\label{app:bias}

Let $\widetilde{L}_k^r$ and $w_{k,\star}$ be defined as in Section \ref{main}. For every $k\in\K$, let
$$
{L}_k(w, \theta, \nu)= \E_{s,a \sim \nu}\left[\left({A}^{\pi_{\theta}}_k(s,a)-\nabla_{\theta_k}\log\pi_{\theta_k}(a_k|s)\cdot w\right)^2\right].
$$
Let $\varepsilon_{\text{bias},r}$ be the smallest possible localized bias term associated to the range parameter $r$, i.e.
$$
\varepsilon_{\text{bias},r}=\max_{k\in\mathcal{K}}\widetilde{L}_k^r(w_{k,\star}, \theta^{(t)},d^\star),
$$
which is the same assumption as in Theorem 9, but with an equality instead of an inequality. Let $\varepsilon_\text{bias}$ be the smallest possible non-localized bias term associated with an infinite range parameter, i.e. the case where we make no approximation, namely,
$$
\varepsilon_{\text{bias}}=\max_{k\in\mathcal{K}}{L}_k(w'_{k,\star}, \theta^{(t)},d^\star)
$$
where
$$
w'_{k,\star}\in\argmin_{\lVert w\rVert_2\leq W}{L}_k(w, \theta^{(t)},d^{(t)}).
$$

\begin{proposition}
	\label{prop:bias}
	Assume that
	$$
	\left\lVert\nabla_{\theta_k}\log\pi^{(t)}(a|s)\right\rVert_2\leq B
	$$
	and that
	$$
	\left\lVert\nabla_{(\theta_k)_{N^r_{-k}}} \pi_\theta(a_k|s)\right\rVert_2 \leq \omega_r.
	$$
	Then
	$$
	\varepsilon_{\text{bias},r}\leq\varepsilon_{\text{bias}}+e_r+2\left(\frac{2}{1-\gamma}+WB\right)\sqrt{\frac{2\kappa' e_r}{1-\gamma}}+\frac{2\kappa' e_r}{1-\gamma},
	$$
	where
	$$
	e_r=\left(\frac{4}{1-\gamma}+2WB\right)W\omega_r+\left(\frac{4}{1-\gamma}+2WB\right)\left(c\psi^{r+1}+c'\phi^{r+1}\right).
	$$
\end{proposition}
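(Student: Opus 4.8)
The plan is to compare the localized loss $\widetilde{L}_k^r$ against the non-localized loss $L_k$ through a single uniform-in-$w$ estimate, and then to pay the distribution shift from $d^{(t)}$ to $d^\star$ using $\kappa'$, exactly as in the Term~3 analysis in the proof of Theorem~\ref{thm:main}. Write $g_k=\nabla_{\theta_k}\log\pi^{(t)}(a|s)$ and $\widetilde g_k=\nabla_{(\theta_k)_{N^r_k}}\log\pi^{(t)}(a_k|s_{N^r_k})$, so that $\widetilde{L}_k^r$ differs from $L_k$ only in replacing $A_k^{\pi}$ by $\widetilde A_k^{\pi}$ and $g_k$ by $\widetilde g_k$. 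Expanding the difference of the two squared integrands as a product of a sum and a difference, the sum factor is at most $\frac{4}{1-\gamma}+2WB$, since each of $|\widetilde A_k-\widetilde g_k\cdot w|$ and $|A_k-g_k\cdot w|$ is bounded by $\frac{2}{1-\gamma}+WB$ using $|\widetilde A_k|,|A_k|\le\frac{2}{1-\gamma}$, $\norm{\nabla\log\pi}_2\le B$ and $\norm{w}_2\le W$. The difference factor is $(\widetilde A_k-A_k)+(g_k-\widetilde g_k)\cdot w$; its first piece is at most $c\psi^{r+1}+c'\phi^{r+1}$ by Term~2 of the proof of Theorem~\ref{thm:main}, and since Assumption~\ref{ass:pol2} forces the $N^r_k$-block of $g_k$ to equal $\widetilde g_k$, its second piece reduces to $\nabla_{(\theta_k)_{N^r_{-k}}}\log\pi\cdot w_{N^r_{-k}}$, controlled by $W\omega_r$ via the assumption on $\nabla_{(\theta_k)_{N^r_{-k}}}\pi$. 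Multiplying the two factors yields the key pointwise bound
\[
\left|\widetilde{L}_k^r(w,\theta^{(t)},\nu)-L_k(w,\theta^{(t)},\nu)\right|\le e_r
\qquad\text{for all }\norm{w}_2\le W\text{ and all }\nu.
\]

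Next I would bound the optimization gap at $d^{(t)}$. Since $\widetilde{L}_k^r$ ignores the $N^r_{-k}$-block of $w$, the minimizer $w_{k,\star}$ satisfies $\widetilde{L}_k^r(w_{k,\star},d^{(t)})\le\widetilde{L}_k^r(w'_{k,\star},d^{(t)})$; sandwiching both ends with the pointwise bound gives $L_k(w_{k,\star},d^{(t)})-L_k(w'_{k,\star},d^{(t)})\le 2e_r$. The first-order optimality of $w'_{k,\star}$ for the quadratic $L_k(\cdot,d^{(t)})$ then converts this gap into a bound on a quadratic form, $(w_{k,\star}-w'_{k,\star})^\top\Sigma^{(t)}_{d^{(t)},k}(w_{k,\star}-w'_{k,\star})\le L_k(w_{k,\star},d^{(t)})-L_k(w'_{k,\star},d^{(t)})\le 2e_r$, precisely as in Term~3. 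Transferring the quadratic form to $d^\star$ through $v^\top\Sigma^{(t)}_{d^\star,k}v\le\kappa'\,v^\top\Sigma^{(t)}_{\nu,k}v$ and $(1-\gamma)\nu\le d^{(t)}$ gives, for $v=w_{k,\star}-w'_{k,\star}$,
\[
(w_{k,\star}-w'_{k,\star})^\top\Sigma^{(t)}_{d^\star,k}(w_{k,\star}-w'_{k,\star})\le\frac{2\kappa' e_r}{1-\gamma}=:D.
\]

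Finally I would assemble everything at $d^\star$. The pointwise bound at $\nu=d^\star$ gives $\widetilde{L}_k^r(w_{k,\star},d^\star)\le L_k(w_{k,\star},d^\star)+e_r$, while the $L^2(d^\star)$ triangle inequality together with the previous display yields $\sqrt{L_k(w_{k,\star},d^\star)}\le\sqrt{L_k(w'_{k,\star},d^\star)}+\sqrt{D}$. Squaring, bounding $\sqrt{L_k(w'_{k,\star},d^\star)}\le\frac{2}{1-\gamma}+WB$ in the cross term and $L_k(w'_{k,\star},d^\star)\le\varepsilon_{\text{bias}}$ in the leading term, and taking the maximum over $k\in\K$, I recover
\[
\varepsilon_{\text{bias},r}\le\varepsilon_{\text{bias}}+e_r+2\left(\frac{2}{1-\gamma}+WB\right)\sqrt{D}+D,
\]
which is exactly the claim once $D=\frac{2\kappa' e_r}{1-\gamma}$ is substituted.

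The main obstacle is the mismatch between the full-space minimizer $w'_{k,\star}$, which may load the distant parameters in $N^r_{-k}$, and the truncated minimizer $w_{k,\star}$. The resolution is that the uniform pointwise estimate holds for \emph{every} $w$ in the ball, so it applies verbatim to $w'_{k,\star}$ even though $\widetilde{L}_k^r$ discards its distant block; this lets me sandwich the two minima without a separate projection argument and keeps the extra error at the single scale $e_r$. The secondary technical point is the conversion, used in the difference factor, of the assumption on $\nabla_{(\theta_k)_{N^r_{-k}}}\pi$ into a bound on the truncation error of $\nabla_{(\theta_k)_{N^r_{-k}}}\log\pi$.
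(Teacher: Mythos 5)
Your overall architecture---the uniform-in-$w$ estimate $|\widetilde{L}^r_k(w,\theta^{(t)},\nu)-L_k(w,\theta^{(t)},\nu)|\le e_r$, an optimality argument converting a $2e_r$ loss gap at $d^{(t)}$ into a quadratic form, a $\kappa'$ transfer to $d^\star$, and a squared triangle inequality---is the same as the paper's, and your first step (including the $2e_r$ sandwich) matches it exactly. The gap is in the middle. You invoke first-order optimality of the \emph{full-space} minimizer $w'_{k,\star}$ for $L_k(\cdot,d^{(t)})$; since $L_k$ is built from $\nabla_{\theta_k}\log\pi$, this yields the quadratic form of the \emph{full} covariance $\E_{d^{(t)}}\bigl[\nabla_{\theta_k}\log\pi^{(t)}(\nabla_{\theta_k}\log\pi^{(t)})^\top\bigr]$ at $v=w_{k,\star}-w'_{k,\star}$, a vector whose $N^r_{-k}$ block, $-(w'_{k,\star})_{N^r_{-k}}$, is nonzero in general. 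But the $\kappa'$ condition in Theorem \ref{thm:main} is stated for the \emph{truncated} covariance $\Sigma_{\nu,k}$ built from $\nabla_{(\theta_k)_{N^r_k}}\log\pi(a_k|s_{N^r_k})$ (indeed, $\Sigma^{(t)}_{d^{(t)},k}$ acts only on the $N^r_k$-block coordinates, so your display $v^\top\Sigma^{(t)}_{d^{(t)},k}v$ with full-dimensional $v$ does not even type-check), and likewise your final $L^2(d^\star)$ triangle inequality for $L_k$ requires a bound on $\E_{d^\star}[(\nabla_{\theta_k}\log\pi\cdot v)^2]$, again the full covariance. Writing the chain with $\Sigma^{(t)}_{d^{(t)},k}$ and $\Sigma^{(t)}_{d^\star,k}$ silently identifies the full and truncated matrices; this is false in general, as the two quadratic forms differ by cross terms involving the distant-gradient block, which is only $\omega_r$-small. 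So two links of your chain are not justified by the stated assumptions. The discrepancy is of the same order as $e_r$, so your route can be repaired by inserting explicit $O(W\omega_r)$ corrections, but then the constants no longer come out exactly as in the proposition.

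The paper avoids this by staying in the truncated geometry throughout: it applies first-order optimality of the \emph{truncated} minimizer $w_{k,\star}$ for $\widetilde{L}^r_k(\cdot,d^{(t)})$ at the feasible comparison point $(w'_{k,\star})_{N^r_k}$ (feasible since $\norm{(w'_{k,\star})_{N^r_k}}_2\le W$ and its distant block is zero), producing the truncated form $u^\top\Sigma^{(t)}_{d^{(t)},k}u$ with $u=(w'_{k,\star})_{N^r_k}-w_{k,\star}$, which interfaces correctly with $\kappa'$; your $2e_r$ sandwich bounds the corresponding gap $\widetilde{L}^r_k((w'_{k,\star})_{N^r_k},d^{(t)})-\widetilde{L}^r_k(w_{k,\star},d^{(t)})$ unchanged. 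Then, in the assembly, instead of your opening move $\widetilde{L}^r_k(w_{k,\star},d^\star)\le L_k(w_{k,\star},d^\star)+e_r$, the paper expands $\widetilde{L}^r_k(w_{k,\star},d^\star)$ around $(w'_{k,\star})_{N^r_k}$ \emph{inside} $\widetilde{L}^r_k$, so the cross and quadratic terms carry the truncated covariance bounded by $D=\frac{2\kappa' e_r}{1-\gamma}$, and the uniform $e_r$ bound is used only once at the end, on the leading term $\widetilde{L}^r_k((w'_{k,\star})_{N^r_k},d^\star)\le L_k(w'_{k,\star},d^\star)+e_r\le\varepsilon_{\text{bias}}+e_r$. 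With these two substitutions your argument becomes the paper's proof. (The secondary point you flag---converting the assumption on $\nabla_{(\theta_k)_{N^r_{-k}}}\pi_\theta$ into a bound on the corresponding block of $\nabla\log\pi$---is left implicit in the paper as well, so it does not count against you.)
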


The second assumption can be respected by choosing a policy belonging to the policy class described in Appendix \ref{app:pol_ex}, as $\omega_r$ can be controlled by setting the parameters $\{\alpha_{r}\}$ small enough. In particular, it is possible to set it to be exponentially small in $r$. Therefore, the proposition shows that the localized bias is at most the bias associated with an infinite range parameter plus a quantity that goes to 0 exponentially fast in $r$. We present the proof of this result below.

\begin{proof}[of Proposition \ref{prop:bias}]
	To prove the proposition we need two intermediate results. For any $w$,$\theta$ and $\nu$, we have that
	\begin{align*}
		&\left|\widetilde{L}^r_k((w)_{N^r_{k}},\theta, \nu)-L_k(w, \theta, \nu)\right|\\
		&=\E_{s,a \sim \nu}\left[\left(\widetilde{A}^{\pi_\theta}_k(s_{N^r_k},a_{N^r_k})\right)^2-\left({A}^{\pi_{\theta}}_k(s,a)\right)^2 \right]\\
		&\quad+\E_{s,a \sim \nu}\left[\left(\nabla_{(\theta_k)_{N^r_{k}}}\log\pi^{(t)}(a|s)\cdot (w)_{N^r_{k}}\right)^2 - \left(\nabla_{\theta_k}\log\pi_{\theta_k}(a_k|s)\cdot w\right)^2\right]
	\end{align*}
	\begin{align*}
		&\quad+2\E_{s,a \sim \nu}\left[{A}^{\pi_{\theta}}_k(s,a)\nabla_{\theta_k}\log\pi_{\theta_k}(a_k|s)\cdot w-\widetilde{A}^{\pi_\theta}_k(s_{N^r_k},a_{N^r_k})\nabla_{(\theta_k)_{N^r_{k}}}\log\pi^{(t)}(a|s)\cdot (w)_{N^r_{k}}\right]\\
		&\leq \frac{4}{1-\gamma}\left(c\psi^{r+1}+c'\phi^{r+1}\right) + 2W^2B\omega_r + 2WB\left(c\psi^{r+1}+c'\phi^{r+1}\right) + \frac{4}{1-\gamma}W\omega_r\\
		&= e_r.
	\end{align*}
	Following the same passages in the proof of Theorem \ref{thm:main} in Appendix \ref{app:main}, we have that
	\begin{align*}
		&\max_{k\in\mathcal{K}}\E_{s,a \sim d^\star}\left[\nabla_{(\theta_k)_{N^r_{k}}}\log\pi^{(t)}(a|s_{N^r_{k}})^\top\left((w'_{k,\star})_{N^r_{k}}-w_{k,\star}\right)\right]\\
		&\leq\sqrt{\frac{\kappa'}{1-\gamma}(\widetilde{L}^r_k((w'_{k,\star})_{N^r_{k}}, \theta^{(t)}, d^{(t)})-\widetilde{L}^r_k(w_{k,\star}, \theta^{(t)}, d^{(t)}))}\\
		&[\text{using the first result}]\\
		&\leq\sqrt{\frac{\kappa'}{1-\gamma}\left|{L}_k(w'_{k,\star}, \theta^{(t)}, d^{(t)})-\widetilde{L}^r_k(w_{k,\star}, \theta^{(t)}, d^{(t)})\right|+\frac{\kappa' e_r}{1-\gamma}}\\
		&	=\sqrt{\frac{\kappa'}{1-\gamma}\left|\min_{\lVert w\rVert_2\leq W}{L}_k(w, \theta^{(t)}, d^{(t)})-\min_{\lVert w\rVert_2\leq W}\widetilde{L}^r_k((w)_{N^r_{k}}, \theta^{(t)}, d^{(t)})\right|+\frac{\kappa' e_r}{1-\gamma}}\\
		&\leq\sqrt{\frac{2\kappa' e_r}{1-\gamma}}.
	\end{align*}
	We can now prove the proposition.
	\begin{align*}
		\varepsilon_{\text{bias},r}&=\max_{k\in\mathcal{K}}\E_{s,a \sim d^\star}\left[\left(\widetilde{A}^{(t)}_k(s_{N^r_k},a_{N^r_k})- \nabla_{(\theta_k)_{N^r_{k}}}\log\pi^{(t)}(a|s_{N^r_{k}})\cdot w_{k,\star}\right)^2\right]\\
		&=\max_{k\in\mathcal{K}}\E_{s,a \sim d^\star}\left[\left(\widetilde{A}^{(t)}_k(s_{N^r_k},a_{N^r_k})- \nabla_{(\theta_k)_{N^r_{k}}}\log\pi^{(t)}(a|s_{N^r_{k}})\cdot (w'_{k,\star})_{N^r_{k}}\right)^2\right.\\
		&\qquad\qquad\qquad\quad+2\left(\widetilde{A}^{(t)}_k(s_{N^r_k},a_{N^r_k})-\nabla_{(\theta_k)_{N^r_{k}}}\log\pi^{(t)}(a|s_{N^r_{k}})\cdot (w'_{k,\star})_{N^r_{k}}\right)\\
		&\qquad\qquad\qquad\qquad\quad\cdot\nabla_{(\theta_k)_{N^r_{k}}}\log\pi^{(t)}(a|s_{N^r_{k}})^\top\left((w'_{k,\star})_{N^r_{k}}-w_{k,\star}\right)\\
		&\left.\qquad\qquad\qquad\quad+\left(\nabla_{(\theta_k)_{N^r_{k}}}\log\pi^{(t)}(a|s_{N^r_{k}})^\top\left((w'_{k,\star})_{N^r_{k}}-w_{k,\star}\right)\right)^2\right]
	\end{align*}
\begin{align*}
		&\leq \max_{k\in\mathcal{K}}\E_{s,a \sim d^\star}\left[\left(\widetilde{A}^{(t)}_k(s_{N^r_k},a_{N^r_k})- \nabla_{(\theta_k)_{N^r_{k}}}\log\pi^{(t)}(a|s_{N^r_{k}})\cdot (w'_{k,\star})_{N^r_{k}}\right)^2\right]\\
		&\quad+2\left(\frac{2}{1-\gamma}+WB\right)\sqrt{\frac{2\kappa' e_r}{1-\gamma}}+\frac{2\kappa' e_r}{1-\gamma}\\
		&= \max_{k\in\mathcal{K}}\widetilde{L}_k^r((w'_{k,\star})_{N^r_{k}}, \theta^{(t)},d^\star)+2\left(\frac{2}{1-\gamma}+WB\right)\sqrt{\frac{2\kappa' e_r}{1-\gamma}}+\frac{2\kappa' e_r}{1-\gamma}\\
		&\leq\varepsilon_{\text{bias}}+e_r+2\left(\frac{2}{1-\gamma}+WB\right)\sqrt{\frac{2\kappa' e_r}{1-\gamma}}+\frac{2\kappa' e_r}{1-\gamma}.
	\end{align*}
\end{proof}

\end{document}